\newtheorem{theorem}{Theorem}
\newtheorem{definition}{Definition}
\newcommand{\ZbYx}{{\bm{Z}\hspace{-.5pt}Y\mspace{-1mu}(x) \hspace{-.5pt}}}
\newcommand{\YxIZb}{{\hspace{-.5pt}Y\mspace{-1mu}(x) \hspace{-.5pt} | \hspace{-.5pt} \bm{Z}}}
\newcommand{\YIZbX}{{Y \hspace{-.5pt} | \hspace{-.5pt}\bm{Z} \hspace{-.5pt}X}}
\newcommand{\ZbXY}{{\hspace{-.5pt}\bm{Z} \hspace{-1pt} X \hspace{-.75pt} Y}}
\newcommand{\ZbX}{{\hspace{-.5pt} \bm{Z} \hspace{-1pt} X}}
\newcommand{\Yx}{{Y\mspace{-1mu}(x)}}
\newcommand{\cmid}{\,|\,}
\title{Testing Generalizability in Causal Inference}
\author[1]{\href{mailto:<manela@stats.ox.ac.uk>?Subject=Your UAI 2025 paper}{Daniel~de~Vassimon~Manela\thanks{Equal contribution. }}{}}
\author[1]{\href{mailto:<linying.yang@stats.ox.ac.uk>?Subject=Your UAI 2025 paper}{Linying~Yang$^{\ast}$}}
\author[1,2]{Robin~J.~Evans}
\affil[1]{%
    Department of Statistics, University of Oxford
}
\affil[2]{%
    Pioneer Centre for SMARTbiomed, University of Oxford
}
\begin{document}

\maketitle
\begin{abstract}
Ensuring robust model performance in diverse real-world scenarios requires addressing generalizability across domains with covariate shifts. However, no formal procedure exists for statistically evaluating generalizability in machine learning algorithms. 
Existing predictive metrics like mean squared error (MSE) help to quantify the relative performance between models, but do not directly answer whether a model can or cannot generalize. 
To address this gap in the domain of causal inference, we propose a systematic framework for statistically evaluating the generalizability of high-dimensional causal inference models. Our approach uses the frugal parameterization to flexibly simulate from fully and semi-synthetic causal benchmarks, offering a comprehensive evaluation for both mean and distributional regression methods. 
Grounded in real-world data, our method ensures more realistic evaluations, which is often missing in current work relying on simplified datasets. 
Furthermore, using simulations and statistical testing, our framework is robust and avoids over-reliance on conventional metrics, providing statistical safeguards for decision making. 
\end{abstract}

\section{Introduction}\label{sec:intro}
Model generalizability has garnered significant interest in causal inference \citep{bareinboim2016causal,curth2021really, johansson2018learning, buchanan2018generalizing,ling2022critical,bica2022transfer}. This encompasses transportability under covariate shifts between domains and extrapolation. In causal inference, it specifically refers to the ability of a causal model to make accurate predictions or draw valid conclusions when applied to a domain different from the one it was trained on. This concept is crucial when the objective involves understanding and predicting the effects of interventions across various settings. It holds particular importance in clinical contexts, where the 
interest in personalized treatment and patient stratification underscores the need to generalize inferences across diverse populations.

Current approaches for evaluating model generalizability generally involve using predictive metrics like AUC for classification or mean squared error for regression \citep{zhou2022domain,yu2024survey}. 
However, these metrics do not directly assess the evidence for whether a model generalizes across domains, nor do they provide error-controlled decision thresholds.
Does an MSE of 5 on another domain imply that the model does not generalize? How about an MSE of 1? Are these results and interpretations reproducible with statistical guarantees? How much does random noise affect these metrics? These are critical problems that should be carefully considered in causal inference questions involving multiple domains. It is essential to establish a systematic evaluation framework for generalizability performance, which offers a robust, reproducible evaluation of model performance on relevant tasks.

One approach to this problem is statistical testing, where we set the question of interest as the hypothesis we test against. However, it is difficult to obtain power against a wide-range of alternative hypotheses when performing tests conditional on a high-dimensional covariate set. This is a problem for causal practitioners as they are often interested in modeling quantities such as the individual treatment effect. 

    
\paragraph{Main Contributions} We propose a systematic framework for statistically  evaluate the generalizability of high-dimensional causal inference algorithms by targeting low-dimensional causal margins. 
Complementing existing predictive metrics such as MSE, we provide a testing framework that statistically evaluates the transportability of both mean and distributional regression methods. 


Our method includes a semi-synthetic simulation framework using two domains, training ($A$) and testing ($B$), which have different covariate ($\bm{Z}$) and treatment ($X$) distributions, but whose \emph{conditional outcome distribution} (COD, $\Yx \mid \bm{Z}$) is assumed to be the same. First, we fit a frugally parameterized model \citep{evans2024parameterizing} to learn the COD $P_{\Yx|\bm{Z}}$ on domain $B$. The frugal parameterization allows us to obtain the \emph{marginal outcome distribution} (MOD) of $\Yx$ on domain $B$ 
explicitly as part of the joint. 
We then generate semi-synthetic outcome samples of domain $A$ by applying the COD of domain $B$, while using the covariates and treatments from domain $A$. 

Next, we train the causal model of interest on these semi-synthetic samples in domain $A$, and use it to estimate marginal causal quantities for domain $B$. The model's generalizability is assessed by statistically testing its ability to recover marginal causal quantities from domain $B$ against the \emph{explicitly known} ground truth inferred earlier. By reducing the complexity from higher-dimensional to a lower-dimensional causal effect, we simplify the evaluation process, enabling more powerful statistical testing.

The availability of exact marginal quantities in domain $B$ enables us to construct our proposed workflow.
In some real applications, it is usually the marginal quantities that are reported. For example, in many studies analyzing COVID-19 outcomes, researchers reported untreated outcomes, such as mortality rates or symptom progression, to contextualize treatment effects. The untreated mortality rate for severe COVID-19 in \cite{recovery2021dexamethasone} is often cited as a benchmark for evaluating interventions like dexamethasone. Our method thus  provides a simple and effective solution for assessing generalizability of an algorithm in complicated (real-world) data with statistical guarantees, including Type-I error control.

The code used for this paper can be found in \href{https://github.com/rje42/DomainChange}{\texttt{https://github.com/rje42/DomainChange}}.

\section{Background}\label{sec:background}
Consider a static treatment model with an outcome $Y \in \mathcal{Y}\subseteq \mathbb{R}$ and a general treatment $X$, which can be either continuous or discrete. In addition, we also make the standard causal assumptions of consistency, positivity, and conditional ignorability outlined in \citet{pearl2009causality} throughout the paper.
Let the set of measured pretreatment covariates be $\bm{Z} \in \mathcal{Z}\subseteq \mathbb{R}^{D}$. 
We then define the marginal \textit{causal} treatment density as
\begin{equation*}
    p_{\Yx}(y(x)) = \int p_{\YIZbX}(y \cmid \bm{z}, x) ~ p_{\bm{Z}}(\bm{z})~d\bm{z};
\end{equation*} 
this is marginalized over the covariates.  Here $\Yx$ is the \emph{potential outcome} for $Y$ given that $X$ is set to a value $x$.


We also use $\mu(x) = \mathbb{E}\, \Yx$ to denote the expected outcome given an intervention that sets $\{X=x\}$, and $\mu(x,z) = \mathbb{E}\left[Y(X=x)\mid Z=z\right]$ to denote the conditional expectation given covariate values. Note that $\Yx$ is essentially equivalent to $Y \mid \text{do}(X=x)$ in the notation of \citet{pearl2009causality}. When the treatment is binary, we define $\tau = \mathbb{E}[Y(1) -Y(0)]$ as the average treatment effect (ATE), quantifying the overall impact of a treatment change across the entire population. Similarly, let $\tau(z) = \mathbb{E}[Y(1) -Y(0)\mid Z=z]$ be the conditional average treatment effect (CATE), giving the result for specific subgroups or individuals, and therefore capturing treatment effect heterogeneity.

Denote the probability measures in domain A and domain B as $P^A$, $P^B$ respectively. Since our scenario requires that the conditional outcome distributions are the same we have $P^A_{\Yx\mid\bm{Z}}=P^B_{\Yx\mid\bm{Z}}$; however, since the covariate and treatment distributions may differ, the corresponding equality between the \emph{marginal} causal distributions does not necessarily hold. 

We aim to evaluate the generalizability of an outcome regression model $\hat{f}(\bm{z},x)$ that predicts the expected outcome $Y$. Predicted outcomes are denoted by $\hat{y} :=\hat{f}(\bm{z},x)$.

\subsection{Generalizability in Causal Inference}
\label{sec:generalizability_in_causal_inference}
Extensive research has focused on generalizability in causal inference, as mentioned in the introduction. 
As highlighted by \cite{ling2022critical}, three common approaches are used to assess treatment effect generalizability: inverse probability of sampling weighting (IPSW) methods that adjust for differences between study and target populations by weighting based on sample inclusion probabilities \citep{buchanan2018generalizing}; outcome models that estimate the conditional outcome directly \citep{kern2016assessing}; and hybrid approaches that combines both \citep{dahabreh2019generalizing}.

In this paper, we focus on algorithms that generalize conditional outcome predictions across different domains, enabling accurate CATE or COD estimation. This is crucial for understanding individual-level treatment effect heterogeneity and ensuring that models can adapt to new populations or environments with varying covariate distributions. A summary of common CATE estimation methods is provided by \cite{caron2022estimating}.


Despite advancements in CATE estimation, a systematic framework for evaluating generalizability remains underdeveloped. For example,  \citet{johansson2018learning} validate their model using both simulated and real world data. The simulated data examples assess predictive generalizability with MSE in the absence of any treatment mechanism, making causal verification impossible. Additionally, their analysis of the IHDP dataset \citep{hill2011bayesian} does not involve covariate or treatment shifts, so it does not effectively test generalizability. Another relevant paper is \citet{shi2021invariant}, which measures out-of-domain generalization performance using the mean absolute error (MAE). While their method achieves the lowest MAE among competitors,  there is no formal criterion to determine whether a specific MAE value signifies sufficient generalization to a new domain.

We highlight these issues not as criticisms of the papers, but to emphasize that robust generalizability evaluation methods of causal models are missing and challenging. Furthermore, existing benchmarks like IHDP are not specifically designed for out-of-domain generalization tests. To address this gap, we propose a systematic semi-synthetic framework to evaluate how well CATE algorithms perform across domains with different covariate distributions, offering a more practical assessment of whether a given approach will generalize well. In \Cref{sec:IHDP}, we adapt the IHDP experiments presented in \citet{johansson2018learning} and extend them by generating datasets from different domains, while making the marginal quantity explicitly known. Furthermore, we contrast the predictive MSE scores with the p-values derived from our tests to show how the latter provides a more actionable metric for whether a model successfully generalizes or not.

\subsection{Frugal Parameterization}\label{subsec:frugal-params}
A frugal parameterization of an observational joint distribution, $P_{\ZbXY}$, factorizes the distribution into a set of causally relevant components~\citep{evans2024parameterizing}. This decomposition explicitly parameterizes the marginal causal distribution, $P_{\Yx}$, or other lower dimensional causal distribution $P_{\Yx|\bm W}, \bm W\subset \bm Z$, and builds the rest of the model around it. Frugal models require that the three usual assumptions for causal inference (consistency, positivity, no unmeasured confounding) in addition to any additional regularity assumptions (further details can be found in Appendix A of \citet{evans2024parameterizing}).

Let us start by first parameterizing the \textit{conditional outcome distribution} (COD), $P_{\YxIZb}$. Frugal models can parameterize the COD in terms of the marginal causal distribution, $P_{\Yx}$, and a conditional copula distribution, $C_{\YxIZb}$. Here, $C_{\YxIZb}$ models the joint dependency between the marginal causal distribution and each of the univariate marginal covariate distributions, $\{P_{Z_i}\}_{i}$ such that
\begin{equation*}
    p_{\YxIZb} = p_{\Yx} \cdot  c_{\YxIZb},
\end{equation*}
where $c_{\YxIZb}$ is  a copula density function that parameterizes the dependence between $\Yx$ and the covariates. Multivariate copulas, particularly when parameterized using pair copula constructions or vine copulas~\citep{czado2022vine}, offer a rich flexible framework for modeling complex multivariate distributions, whilst also capturing (or allowing the user) to encode specific dependency constraints in the target data generating process. See \Cref{app:copulas} for further details on copulas and how they can be be fitted to real-world datasets.

This leaves the distribution of the \textit{past}, 
i.e.~the covariate distribution and the propensity score. We assume that all covariates are strictly pretreatment, so $\bm{Z}$ does not include any mediators of the causal effect of $X$ on $Y$. If we use a conditional copula then the past and the COD are variation independent, in the sense that they parameterize separate, non-overlapping aspects of the joint distribution.
This allows the past to be freely specified without affecting either the conditional copula, or the marginal causal distribution. 

The frugal parameterization also allows us to chose a conditional estimand.  For example, if we were interested in a conditional average treatment effect given $\bm{W} \subset \bm{Z}$, we could write $p_{\Yx|\bm{Z}} = p_{\Yx|\bm{W}} \cdot c_{\Yx|\overline{\bm{Z}}; \bm{W}}$ where $\overline{\bm{Z}} = \bm{Z}\setminus\bm{W}$.  Here $c_{\Yx|\overline{\bm{Z}}; \bm{W}}$ is a pair-copula between $\Yx$ and $\overline{\bm{Z}}$ conditional upon $\bm{W}$. This enables us to condition on a small subset of covariates that we consider to be particularly important in terms of predicting the outcome.

\section{Method}\label{sec:method}
\Cref{fig:algo-workflow} provides an overview of our workflow. We begin by defining both a test and a training domain, each with a distribution over the pretreatment covariates and the treatment, allowing for distribution shifts across covariates and treatment allocation. The COD is frugally parameterized with a conditional copula, where the covariates' cumulative distribution functions (CDFs) are derived from the test domain’s covariate densities. This ensures that samples from the test dataset follow a \textbf{known, customizable} marginal causal density, $p_{\Yx}$.

The training data is generated from the same COD, though since the covariate densities may not match the CDFs used to parameterize the conditional copula we do not have access to the marginal causal distribution in closed-form. We then learn a model, $\hat{f}(\bm{z},x)$, on the training data. Finally, a statistical test is performed to validate whether the lower-dimensional marginal quantity (such as the ATE or an expected potential outcome)  estimated using model outcomes equals the ground truth in the test domain.

\begin{figure}[h]
\vspace{.3in}
\centerline{\includegraphics[width=1\linewidth]{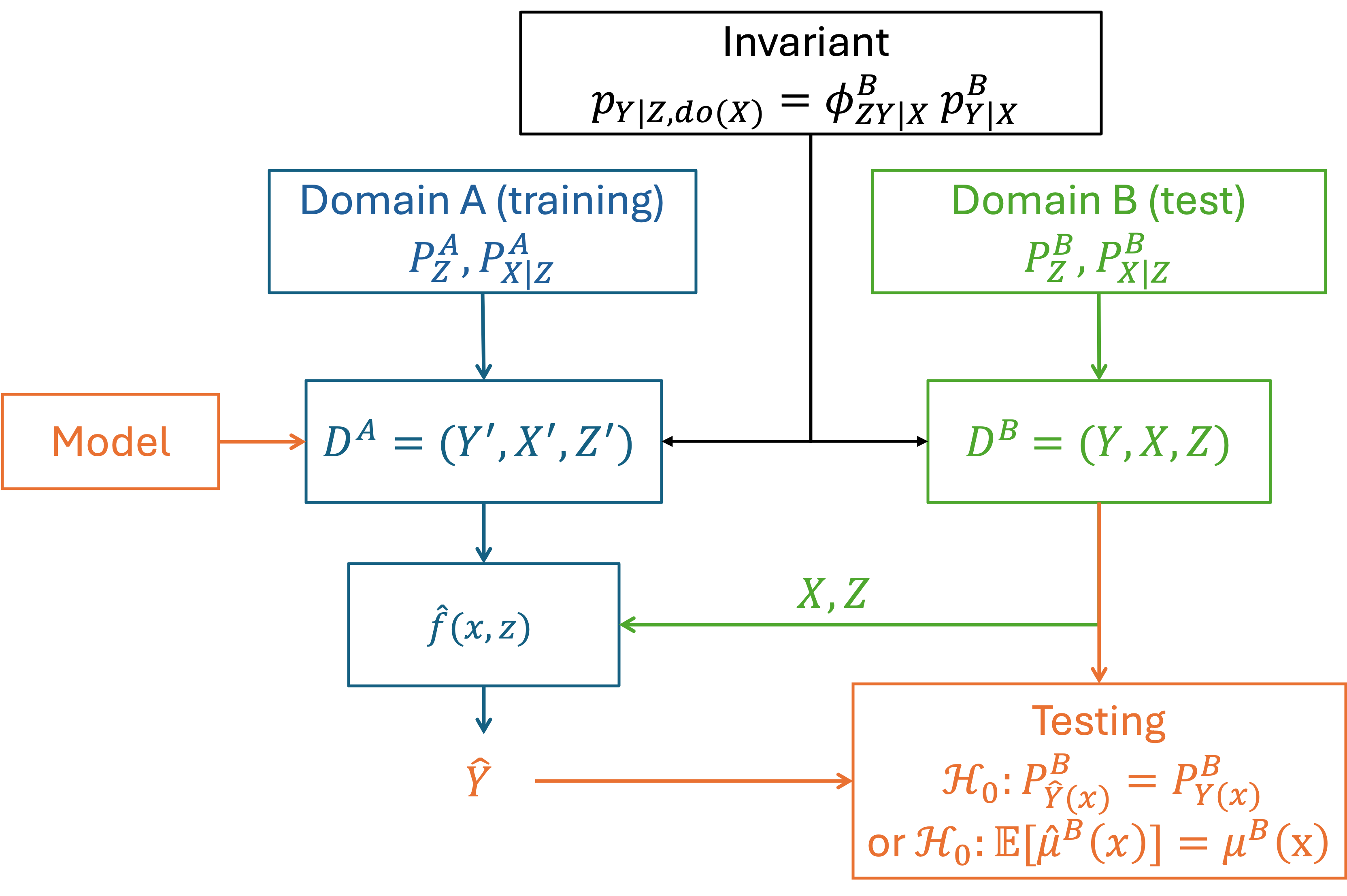}}
\vspace{.3in}
\caption{Workflow of the Proposed Method.}
\label{fig:algo-workflow}
\end{figure}

\subsection{Data Simulation}
In this section we describe how to simulate the data.


\subsubsection{Multi-domain Simulation with Frugal Models}
We begin by specifying two data generating processes: the training data, $D^{A} \sim P^{A}_{\ZbXY}$, and the test data, $D^{B} \sim P^{B}_{\ZbXY}$. Our goal is to construct a COD that parameterizes the joint density across both domains, while ensuring that the marginal causal density in domain $B$ is parameterized by $p^{B}_{\Yx}$. The supports of covariates in domains $A$ and $B$ are denoted $\mathcal{Z}^A$, $\mathcal{Z}^B$.

Recall from \Cref{subsec:frugal-params} that a general observational density can be factorized into the \textit{past}, $p_{\bm{Z}X}$, and the COD:
\begin{equation}\label{eq:cod}
    \begin{aligned}
        p&_{\YxIZb}(y \cmid \bm{z}) = p_{\Yx}(y) \times \\ 
        & \qquad c_{\YxIZb}\!\left(F_{\Yx}(y) \mid F_{Z_{1}}(z_1),\dots, F_{Z_{D}}(z_{d}) \right),
    \end{aligned}
\end{equation}
where $F_{\Yx}$ is the CDF associated with the marginal causal density $p_{\Yx}$. 

Note that the copula density in (\ref{eq:cod}) is not only determined by the copula's family and its parameterization, but also by the choice of marginal CDFs for the covariates, $\bm{Z}$. If the conditional copula density is marginalized over the densities corresponding to the covariate CDFs, then the ranks of the marginal causal density will be uniformly distributed:
\begin{equation*}
    p\left(F_{\Yx}\right) = \int d\bm{z}~c_{\YxIZb}(y(x) \cmid \bm{z}) \cdot \prod_{d=1}^{D}p_{Z_{d}}(z_{d}) = 1.
\end{equation*}
This uniformity is guaranteed if the marginal covariate densities $\{ p_{Z_d} \}_{d=1}^{D}$ correspond to the CDFs used to parameterize the copula. Thus, data simulated using our method matches the marginal causal quantity we specify. 


For evaluating generalization, we set the CDFs within the copula density to be derived from the covariate densities in the test domain $P_{\bm{Z}XY}^{B}$. This allows us to construct the COD density across all covariate and treatment spaces:
\begin{equation*}
    \begin{aligned}
        p&_{\YxIZb}\left( y \cmid \bm{z}\right) = p^{B}_{\Yx}\left(y\right) \times \\ 
        & \qquad c^{B}_{\YxIZb}\!\left(F^{B}_{\Yx}(y) \,\middle|\, F_{Z_1^{B}}(z_1), \dots, F_{Z_{D}^{B}}(z_D) \right),
    \end{aligned}
\end{equation*}
which will sample from a known marginal causal density equal to $p^{B}_{\Yx}$ if the covariate CDFs in the copula are derived from the test domain covariate densities. 

For two joint distributions with the same marginal covariate densities but different marginal causal densities, their CODs must differ. We can thus evaluate differences between CODs via comparing the lower-dimensional marginal causal densities instead.

This offers a great deal of flexibility in testing method generalizability. One can draw training and test datasets with different covariate densities and propensity scores, while guaranteeing that the CODs remain consistent, and that the test data is drawn from a distribution with a marginal causal density parameterized by $p^{B}_{\Yx}$. However, we note that a key assumption of our testing framework is $\mathcal{Z^A}\subseteq \mathcal{Z^B}$, as evaluating $p_{\Yx|\bm{Z}}$ requires evaluation of all marginal covariate CDF defined on domain $B$.

\begin{algorithm}[h!]
\caption{Semi-synthetic Data Generation.}
\begin{algorithmic}
\vspace*{2pt}
\STATE{\textbf{Input}:~Original test data; original covariates and treatment from training data.}
\vspace*{2pt}
\STATE{\textbf{Parameter estimations on test domain $B$}} 

Estimate the joint covariate-treatment density, $\hat{p}^{B}_{\ZbX}$; marginal causal density, $\hat{p}^{B}_{\Yx}$; conditional copula, $\hat{C}^{B}_{\YxIZb}$.
\STATE{\textbf{Data simulation on domain $B$}}

Sample $(\bm{z}^{B}, x^{B}) \sim \hat{p}^{B}_{\ZbX}$;\\
Sample the causal effect rank $\hat{u}^{B}_{\Yx|\bm{Z}} \sim U[0,1]$;\\
Calculate $y^{B} = {\big(\hat{F}_{\Yx}^{B}\big)^{-1}}\left(\hat{C}^{B}_{\Yx|\bm{Z}}(\hat{u}^{B}_{\Yx|\bm{Z}} \mid \bm{z}^{B})\right)$.

\vspace*{2pt}
\STATE{\textbf{Parameter estimation on training domain $A$}}

Estimate the joint covariate-treatment density, $\hat{p}^{A}_{\bm{Z}X}$.

\vspace*{2pt}
\STATE{\textbf{Data simulation on domain $A$}} 

Sample $(\bm{z}^{A}, x^{A}) \sim p^{A}_{\ZbX}$;\\
Sample the causal effect rank $\hat{u}_{\Yx|\bm{Z}}^{A} \sim U[0,1]$;\\
Calculate $y^{A} = {\big(\hat{F}_{\Yx}^{B}\big)^{-1}}\left(\hat{C}^{B}_{\Yx|\bm{Z}}(\hat{u}_{\Yx|\bm{Z}}^{A} \mid \bm{z}^{A})\right)$.

\vspace*{2pt}
\STATE{\textbf{Output}: Training sample $D^{A} = (\bm{z}^{A}, x^{A}, y^{A})$};\\ \hspace*{40pt} Test sample $D^{B} = (\bm{z}^{B}, x^{B}, y^{B})$. 
\end{algorithmic}
\label{alg:semisynthetic_data}
\end{algorithm}

Our primary workflow follows the approach outlined in \Cref{alg:semisynthetic_data}. 
First, we estimate the joint covariate-treatment density of the test data, denoted as $\hat{p}^{B}_{\ZbX}$. We then estimate the marginal causal density $\hat{p}^{B}_{\Yx}$ and the conditional copula $\hat{c}^{B}_{\YxIZb}$, capturing the covariate-outcome dependency conditional on treatment. Given covariate and treatment samples, we can calculate the causal density rank, $\hat{u}_{\Yx}$ using the conditional copula. The outcome can be calculated using the inverse transform ${\big(\hat{F}_{\Yx}^{B}\big)^{-1}}$. For the training data, we follow a similar approach. 

\subsection{Statistical Testing}

Tests of hypotheses about high-dimensional objects have very little power if we wish to consider a wide range of alternatives.  The lower-dimensional objects can potentially increase the chance of rejection substantially if the null hypothesis fails to hold. Given that we know the marginal causal density parameterized by $p^B_{\Yx}$ from the frugal parameterization, we are able to develop statistical testing on  
$\mu^B(x)$ rather than $\mu^B(\bm{z}, x)$ for mean regression models, and $P^B_{\Yx}$ instead of $P^B_{\YxIZb}$ for distributional regression.

Our testing algorithms require some parameters: $N_{btp}$ as the number of bootstrap iterations, $N^{A}$ and $N^{B}$ as the number of samples simulated from training domain and test domain for each bootstrap iteration, respectively. We provide the mean regression test in \Cref{alg:mean_test_algo}, but our algorithm can be extended to distributional regression models: after applying $\hat{f}$ to $D^{B}_b$, for each $i$,  we sample $\{y^j_{ib}\}_{j=1}^{N_Y}$ from the predicted distribution, $\hat{P}_{Y(x_{ib})|z_{ib}}$, and estimate marginal causal distributions such as $\hat{P}^{B}_{Y\left(x^0\right)} :=\bigcup_{b=1}^{N_{btp}} \bigcup_{i=1}^{N^{B}} \bigcup_{j=1}^{N_Y} \left\{ y_{ib}^j \mid x_{ib} = x^0 \right\}$. We then conduct distribution tests, e.g.~the Kolmogorov-Smirnov test, for $\mathcal{H}_0:  \hat{P}^{B}_{Y(x^0)}=P^{B}_{Y(x^0)}$ and get the p-value.

Our testing algorithm is flexible in the choice of testing reference, e.g.~in \Cref{alg:mean_test_algo}, we can replace $\mu^{B}(x)$ with $\tau^{B}$ as the reference target when $X$ is binary, which is what we used in our experiments. The testing method used for distributional regression models can also be replaced by other statistical tests, such as the Maximum Mean Discrepancy Test \citep{gretton2012kernel} or the Cramér-von Mises Test \citep{anderson1962distribution}.



\begin{algorithm}[t]
\caption{Generalizability Evaluation on Mean Regression Models.}
\begin{algorithmic}
\vspace*{2pt}
\STATE{\textbf{Input}:~~~~$\Theta^{A}$: parameters for training domain,\\
\hspace*{34.3pt} $\Theta^{B}$: parameters for test domain,\\
\hspace*{34.3pt} $\mu^{B}(x^0)$: reference.}
\vspace*{3pt}
\FOR{$b=1, \ldots, {N_{btp}}$}
    \STATE{Draw $D_b^{A}:= \{(\bm{z}'_{ib}, x'_{ib}, y'_{ib})\}_{i=1}^{N^{A}} \sim P_{\Theta^{A}}$};
    \STATE{Fit the regression model, $\hat{f}$, on $D_b^{A}$};
    \STATE{Draw $D_b^{B}:= \{(\bm{z}_{ib}, x_{ib})\}_{i=1}^{N^{B}} \sim P_{\Theta^{B}}$};
    \STATE{Apply $\hat{f}$ on $D_b^{B}$ to get predictions $\{\hat{f}(\bm{z}_{ib}, x_{ib})\}_{i = 1}^{N^{B}}$}; 
    \STATE{Calculate 
    $$\hat{\mu}_b^{B}(x^0) = \frac{\sum_{i=1}^{N^{B}} \mathbb{1}\{x_{ib}=x^0\}\hat{f}(\bm{z}_{ib}, x_{ib})}{\sum_{i=1}^{N^{B}}\mathbb{1}\{x_{ib}=x^0\}}.$$}
\ENDFOR
\STATE{\textbf{end for}}
\vspace*{3pt}
\STATE{Get the p-value by conducting a t-test to compare the target parameter $\mu^{B}(x^0)$ and the distribution of $\{\hat{\mu}_b^{B}(x^0)\}_{b=1}^{N_{btp}}$}.
\STATE{\textbf{Return} $p$.}
\vspace*{3pt}
\end{algorithmic}
\label{alg:mean_test_algo}
\end{algorithm}

A summary of this workflow is presented in \Cref{fig:algo-workflow}.

\section{Experiments}\label{sec:experiments}

In this section, we use our workflow to evaluate the generalizability of a range of modern causal models.

As discussed in several review papers like \cite{curth2021really}, \cite{ling2022critical} and 
 \cite{kiriakidou2022evaluation}, methods such as Meta-Learners (e.g.~T- and S-learners) \citep{kunzel2019metalearners}, CausalForest \citep{wager2018estimation}, TARNet \citep{shalit2017estimating}, and BART \citep{chipman2010bart} are widely used for CATE estimation, each offering advantages in different scenarios. Our evaluation focuses on their performance under covariate distribution shifts, specifically examining the accuracy of their CATE estimations. Further details about these models can be found in \Cref{sec:models}.

Another interesting algorithm to be evaluated is engression, introduced in \cite{shen2023engression}. It approximates the conditional distribution using a pre-additive noise model. Targeting at a distributional regression, the model is capable of extrapolating to unseen or underrepresented data points through its learned non-linear transformations.  The key factors which affect engression's generalizability are the distances between two domains, and whether the true underlying function must be strictly monotonic in the extrapolation region. In our experiments, we evaluate engression in both the S-learner and T-learner settings.

\subsection{Synthetic Data}
\label{sec:synthetic}
We first conduct experiments on synthetic data to demonstrate and validate our method. While our approach can handle various data types and is particularly effective with high-dimensional covariates and continuous treatment interventions, for clarity, in this simple example, we focus on two continuous confounders, $Z_1$ and $Z_2$, sampled from identical gamma distributions, with a binary intervention $X$. We initially assume that both datasets come from  randomized controlled trials (RCT), so that $X \sim \operatorname{Bernoulli}(0.5)$ under $P^A$ and $P^B$.  We parameterize the Gaussian copula, $c_{\ZbYx}$, with Spearman correlation coefficients $\rho_{Z_1 Z_2} = 0$, $\rho_{Z_1\Yx} = 0.1$ and $\rho_{Z_2\Yx} = 0.9$. The distribution of $\Yx$  is defined as $\mathcal{N}(2x+1,1)$ in the test domain. For the simulation, we generate $N^{A} = 200$ training samples and 
$N^{B} = 50$ test samples per bootstrap, with $N_{btp}=200$ bootstraps in total, repeating this process for 50 iterations. The marginal distributions of $Z_1$ and $Z_2$ in the training domain follow identical Gamma distributions with shape $k=1$ and rate $\theta=1$.

We examine two settings: in Setting 1, the test domain has a slight covariate shift, with $Z_1$ and $Z_2$ following a Gamma distribution of $k=2$, $\theta=1$. In Setting 2, the shift is more significant ($k=4$, $\theta=1$). Despite these shifts, the COD remains the same due to frugal parameterization, as shown in \Cref{fig:synthetic}.

\begin{figure}[h]
\vspace{.3in}
\centerline{\includegraphics[width=1\linewidth]{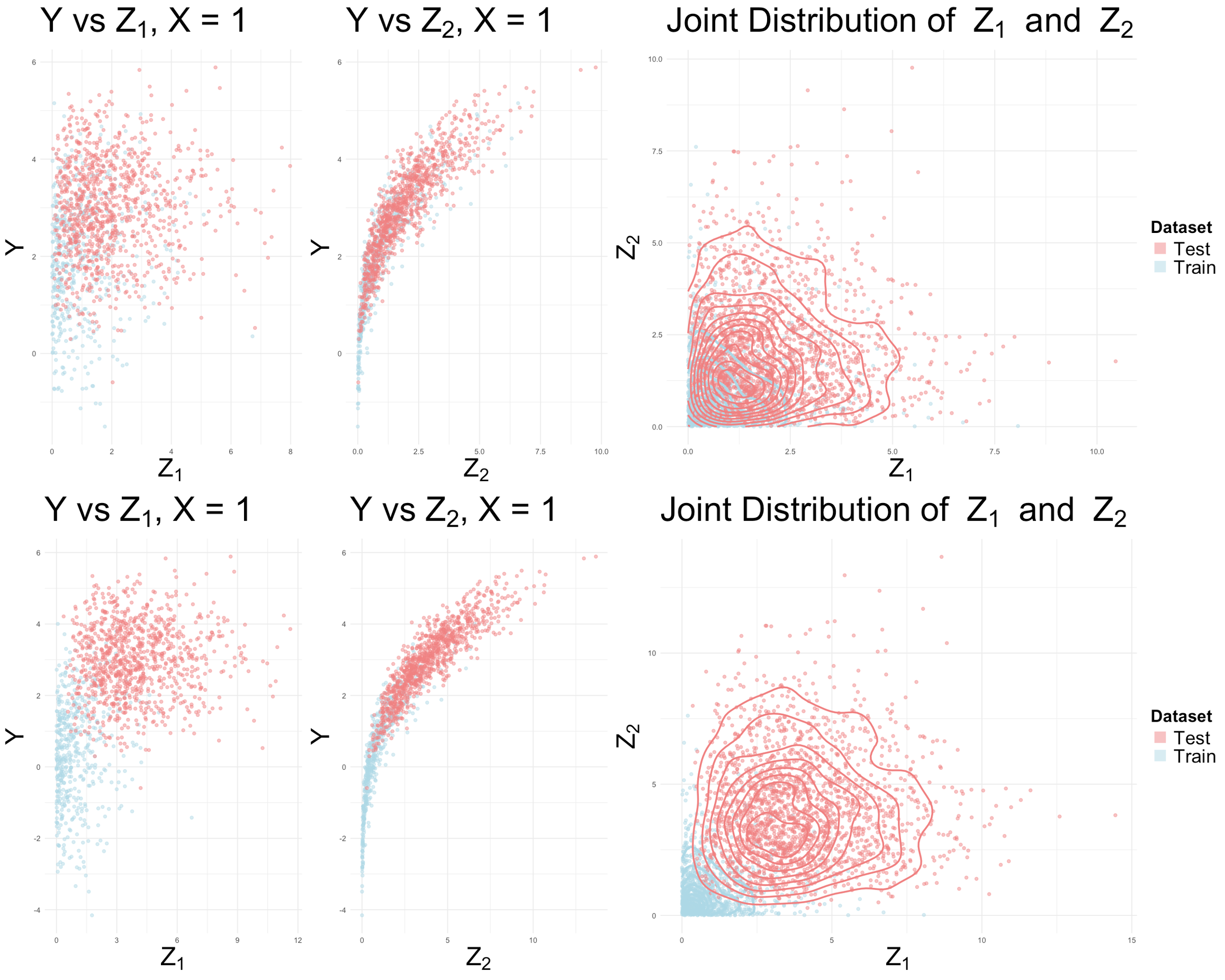}}
\vspace{.3in}
\caption{Synthetic Data Generated from Setting 1 (Top) and Setting 2 (Bottom). }
\label{fig:synthetic}
\end{figure}

The p-values in \Cref{fig:synthetic_mean_p} illustrate the differences across models. As expected, with a more significant domain shift in Setting 2, models face greater difficulty in generalizing, as reflected by the smaller p-values generally compared to Setting 1. T-BART and T-engression showed good generalizability performances in this specific setting with their p-values being uniformly distributed. TARNet struggles, likely due to the complexity of its representation learning network design and hyperparameter tuning.

\begin{figure}[h]
\vspace{.3in}
\centerline{\includegraphics[width=1\linewidth]{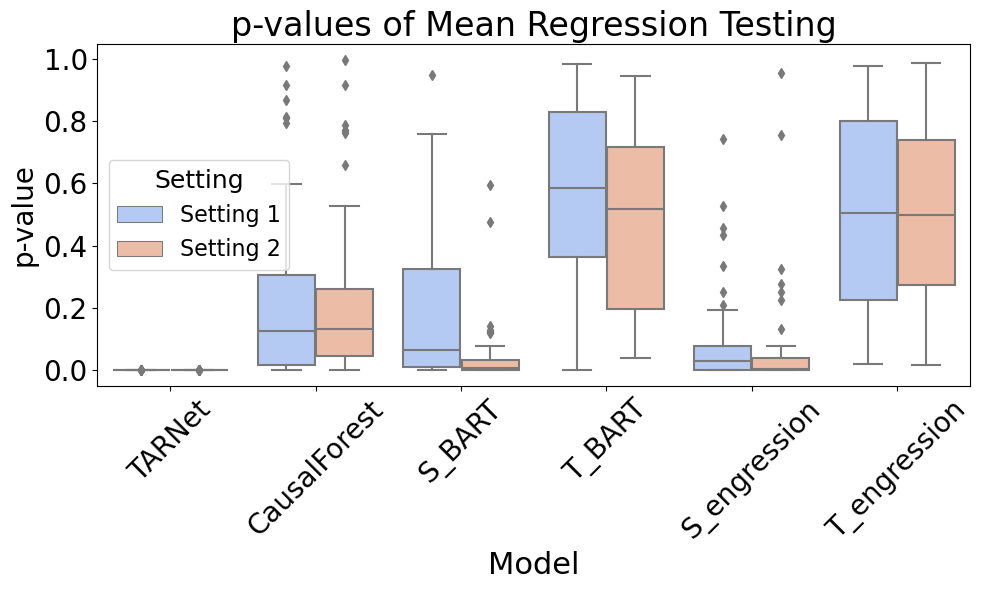}}
\vspace{.3in}
\caption{$p$-values of Mean Regression Testing, Synthetic Data of 50 Iterations.}
\label{fig:synthetic_mean_p}
\end{figure}

With our method, we are able to test the generalizability of distributional regression. \Cref{fig:synthetic_distribution_p} demonstrates the p-values of distributional regression testing of S-engression under the two settings, with $N_Y=50$. Not surprisingly, since the covariate distribution shift in Setting 1 is smaller, S-engression demonstrates better generalizability compared to that in Setting 2.

\begin{figure}[h]
\vspace{.3in}
\centerline{\includegraphics[width=1\linewidth]{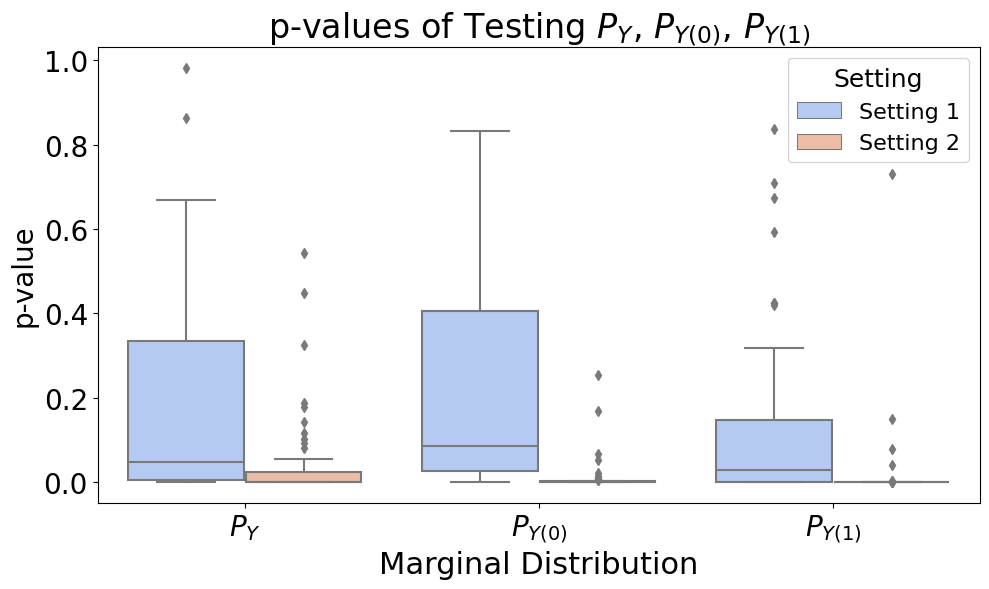}}
\vspace{.3in}
\caption{$p$-values of Distributional Regression Testing (Kolmogorov–Smirnov Test) of S-engression, Synthetic Data of 50 Iterations.}
\label{fig:synthetic_distribution_p}
\end{figure}

Supported by flexible simulations based on actual data, our method is useful for stress testing and model diagnostics. \Cref{fig:varying_n} shows how varying the training set size affects the generalizability of T-BART and T-engression; the performance worsen as $N^{A}$ exceeds 100. This issue may stem from problems like overfitting, but solving these problems is not our focus. Rather, our method serves as a tool to detect and highlight potential issues when making predictions on real data, which is feasible with the simulation based on actual data using the frugal parameterization. We also wish to remark on the difference between the performances of S- and T-learners. In CATE estimation, T-learners fit separate models for each treatment group while S-learners fit a single model across both, with treatment included as a feature. Hence, T-learners offer greater flexibility for modeling patient heterogeneities and it is unsurprising that they consistently outperform S-learners in our experiments.

\begin{figure}[t]
\vspace{.3in}
\centerline{\includegraphics[width=1\linewidth]{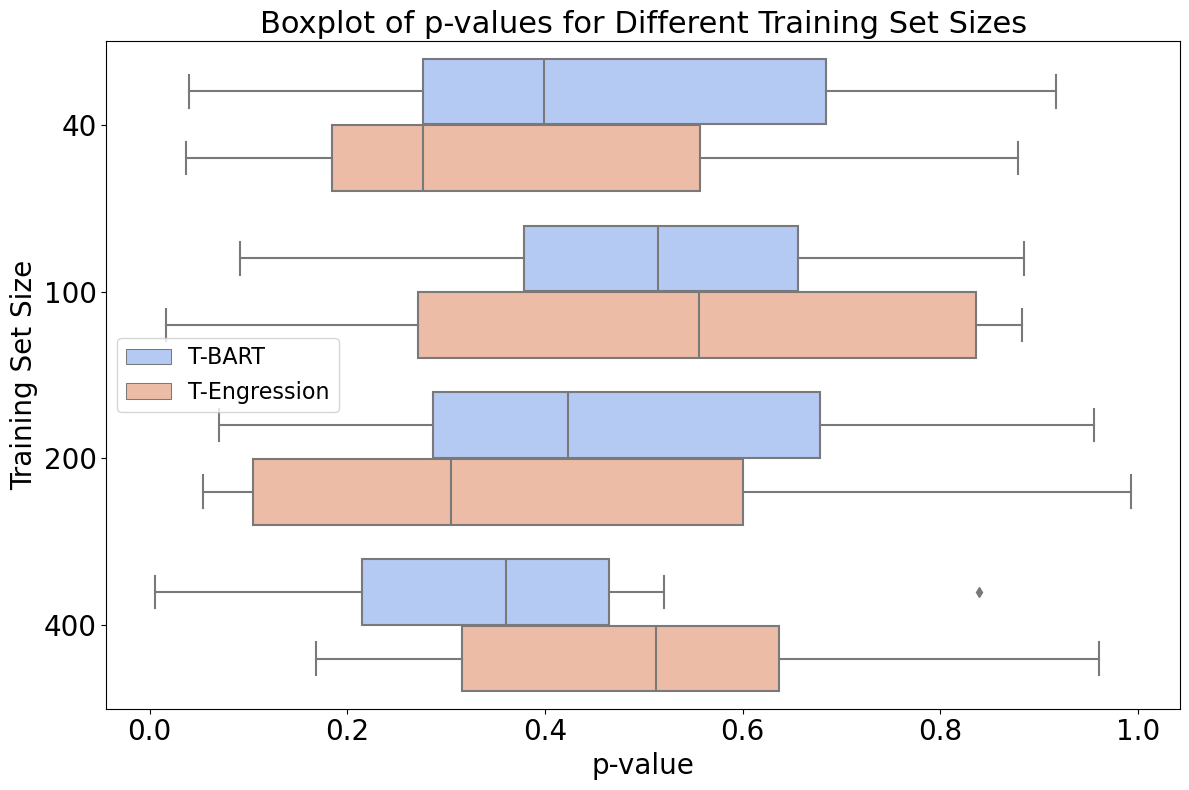}}
\vspace{.3in}
\caption{$p$-values of Mean Regression Testing of 50 Iterations, Varying $N^{A}$, Setting 2, Synthetic Data.}
\label{fig:varying_n}
\end{figure}

Note that extrapolation performance for models like engression is typically evaluated visually, one dimension at a time. Our method, however, offers significant advantages by providing statistical evaluation of extrapolation performance in high-dimensional covariates.


\subsection{Real Data}
\label{sec:IHDP}
We evaluate algorithm generalizability using the Infant Health and Development Program (IHDP) dataset, a randomized experiment conducted between 1985 and 1988 to study the effect of home visits on infants' cognitive test scores~\citep{hill2011bayesian}. This dataset has become widely used in domain adaptation research \citep{curth2021really,shi2021invariant}. 

In this section, we extend the experiments presented by \citet{johansson2018learning} which train a range of causal ML algorithms on IHDP data and measure in-domain predictive performance using MSE. We extend these experiments by showing how our validation framework can be used to test out-of-domain predictive performance. Specifically, we compare the MSE metric against the p-values obtained via our proposed testing framework, highlighting how our method provides a more informative metric of whether a model can generalize robustly across different domains.

The IHDP dataset contains $T=1000$ trials, each consisting of the same 747 subjects and 25 pretreatment covariates, with the first six being continuous and the rest binary.  The potential outcomes $Y\mspace{-1mu}(1)$ and $Y\mspace{-1mu}(0)$ are provided in the data. In each trial $Y\mspace{-1mu}(x) \sim \mathcal{N}(\bm{Z}\beta_t + 4t, \, 1)$, and $\beta_t$ is randomly chosen from values $(0, 1, 2, 3, 4)$ with probabilities $(0.5, 0.2, 0.15, 0.1,0.05)$. Thus, the potential outcomes vary across trials, while the covariates, CATE and ATE remain constant.

First we treat both domains as RCTs, that is, setting the propensity score model as $X\sim \operatorname{Bernoulli}(0.5)$ for all units. The observed outcome is then $Y = X Y\mspace{-1mu}(1) + (1-X) Y\mspace{-1mu}(0)$ by consistency. We randomly select 50 trials from the 1000 available, with each trial used to create one training-test pair, and evaluate the model's generalizability on them. To introduce domain shift, we keep all covariate values identical between the training and test domains, except for $Z_1$, which is set to 1.5 times the original value in the test domain compared to the training domain. For each training-test pair, we learn the parameters following \Cref{alg:semisynthetic_data}, specifying the marginal causal distribution to follow a Gamma distribution  with its parameters estimated from the IHDP data by fitting a generalized linear model. We denote the resulting data generation distributions as $P_{\Theta^{A}}, P_{\Theta^{B}}$ for the training and test domains, respectively. We sample training data of $N^{A} = 1000$ from $P_{\Theta^{A}}$, and $N^{B} = 200$ test data from $P_{\Theta^{B}}$. The number of bootstraps is set to be $N_{btp} = 200$. Note that in our experiments, the outcomes were shifted to ensure they are strictly positive, allowing us to use the parametric form of the Gamma distribution to obtain an explicit expression for the mean.

\Cref{fig:ihdp_mean} shows the boxplot of the $\log_{10}$($p$-values) of each model and \Cref{tab:ihdp_percentage} contains the percentage of $p$-values greater than 0.05 across the $50$ trials.  T-/S-engression demonstrate better generalizability in this setting among all these methods.  We also give the result of distributional regression testing in \Cref{fig:ihdp_dist}.

\begin{figure}[h]
\vspace{.3in}
\centerline{\includegraphics[width=1\linewidth]{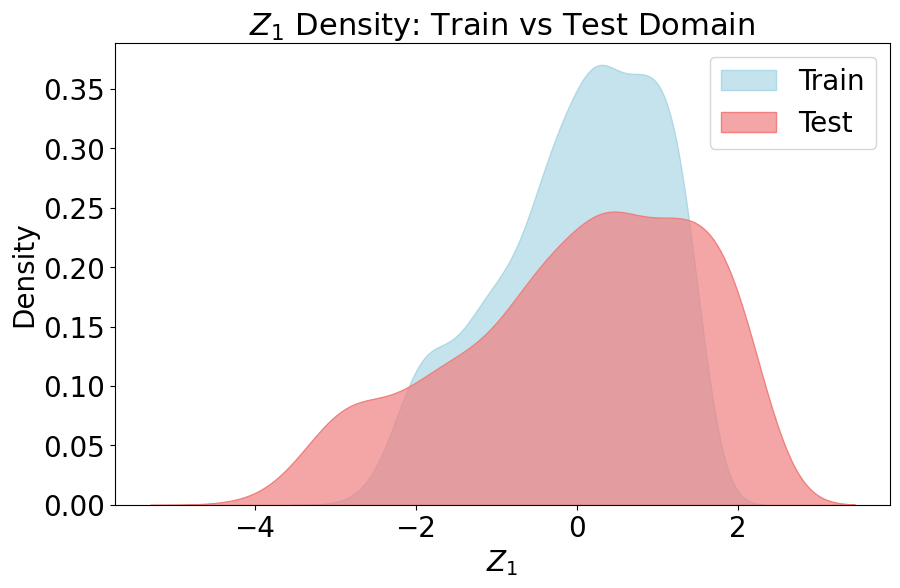}}
\vspace{.3in}
\caption{Density of $Z_1$ of Training and Test Domains.}
\label{fig:ihdp_shift}
\end{figure}
\begin{figure}[t]
\vspace{.3in}
\centerline{\includegraphics[width=1\linewidth]{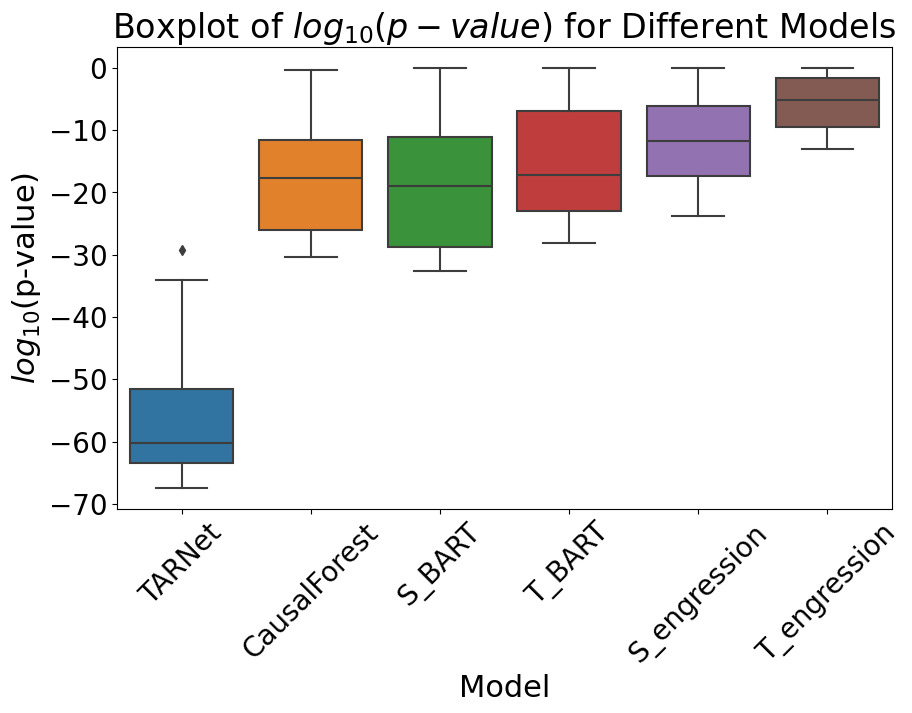}}
\vspace{.3in}
\caption{$\operatorname{log}_{10}(p\text{-values})$ of Mean Regression Testing of 50 Trials in IHDP.}
\label{fig:ihdp_mean}
\end{figure}

\begin{table}[h]
\begin{center}
\begin{tabular}{rrr}
\toprule
\textbf{Model} & \textbf{RCT} & \textbf{Non-RCT} \\
\midrule
TARNet & 0\% & 0\% \\

CausalForest & 12\% & 6\%\\

S-BART & 12\% & 8\% \\

T-BART & 12\% & 6\% \\

S-engression & 18\% & 6\%\\
T-engression & 24\% & 8\%\\
\bottomrule
\end{tabular}
\end{center}

\caption{Percentage of $p > 0.05$ across 50 Trials.} 
\label{tab:ihdp_percentage}

\end{table}

\begin{figure}[t]
\vspace{.3in}
\centerline{\includegraphics[width=1\linewidth]{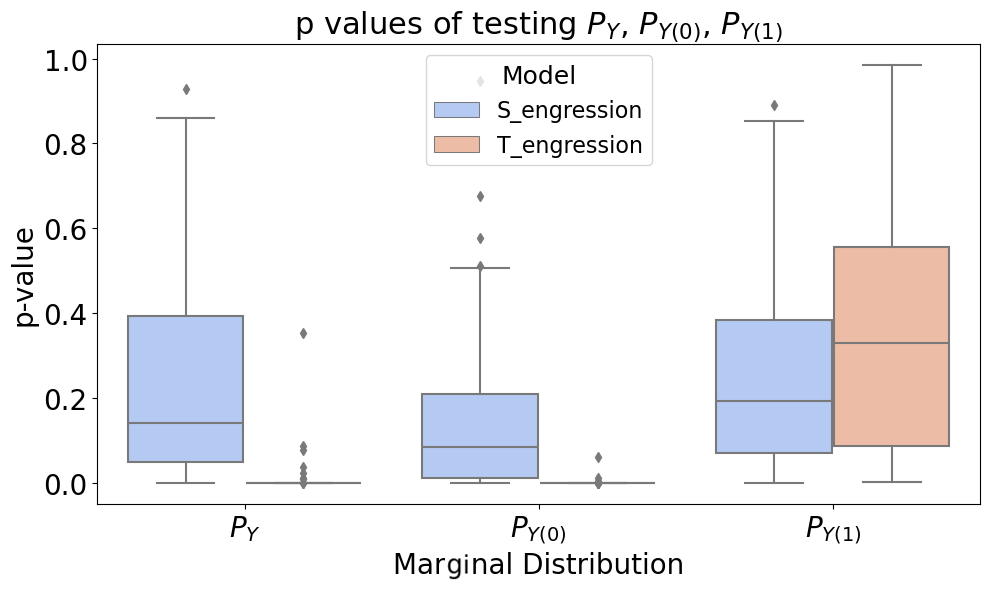}}
\vspace{.3in}
\caption{$p$-values of Distributional Regression Testing of 50 Trials in IHDP.}
\label{fig:ihdp_dist}
\end{figure}


While we use the RCT setting as an example above to demonstrate our method, it is also applicable to observational studies. In a non-randomized setting where treatment arms are imbalanced by setting $P(X=1 \mid Z) = \operatorname{logit}(Z_2+Z_3+Z_4)$,  the percentage of $p>0.05$ across 50 trials of each algorithm is shown in \Cref{tab:ihdp_percentage}. Since our paper's focus is on providing a systematic generalizability evaluation method, we omit further analysis here. 

Although we present such percentage, all p-values, including their distributions, are highly informative. We provide guidance of interpreting the testing results in \Cref{sec:read_p}. 

Note that this framework of constructing statistical tests on marginal quantities is not restricted to out-of-domain generalization testing. We adapt the original experiments in \citet{johansson2018learning}, in which the in-domain model performance was evaluated on IHDP data, and show how our framework can be easily adapted to performance evaluation for in-domain tasks. Since our method was designed for out-of-domain generalizability assessment, we do not discuss this further and leave a detailed discussion in \Cref{sec:indomain}.

Details on hyperparameters and additional experiments, including performance comparisons with or without domain shift when the CATE is known to be linear, are provided in \Cref{sec:computation_details} and \Cref{sec:linear}. 

\section{Discussion}\label{sec:discussion}

We make a few discussion remarks in this section.

\paragraph{Flexibility of vine copula specification} In \Cref{sec:experiments}, we present both fully simulated examples and a semi-synthetic experiments based on the IHDP dataset commonly used to validate generalization in causal inference tasks. We experimented with only Gaussian copulas with a fully connected dependency structure on a relatively small number of covariates. However, our framework can be extended to high-dimensional covariates settings and more complex dependency structures. For example, pair-copula constructions allow for flexible modeling  of non-Gaussian copulas with complex dependency structures. Additionally, one may consider using models such as Frugal Flows \citep{de2024marginal} which fit a more flexible, non-parametric generative frugal model to real world data. Further details and experiments for each of these cases can be found in \Cref{app:vinecop} and \Cref{sec:complicated_exp}. Although our approach is mainly designed for evaluation, we provide additional experiments addressing capability of our method handling model misspecification when generating semi-synthetic data (see the end of \Cref{sec:complicated_exp}).


\paragraph{Equivalence testing} While our approach of rejecting the null hypothesis shows that a model is not generalizable, it does not quantify the extent of failure. An extension of this approach may be to develop a more flexible testing method, inspired by equivalence testing \citep{wellek2002testing}. This would assess not just whether a model fails but also by how much, determining if its performance is significantly worse than a given threshold, offering a more nuanced view than traditional hypothesis testing. We provides some results in \Cref{sec:equiv_testing}. In this paper, we only consider marginal causal quantities as the validation references, but our framework can be easily adapted to use low-dimensional CODs as the reference instead with the flexibility of frugal parameterization (see \Cref{subsec:frugal-params}).

\paragraph{Validity of using low-dimensional proxy} We would also like to emphasize the objective of our method, which is to test the quality of fit of a conditional quantity against a lower dimensional marginal target instead. By introducing a low-dimensional proxy that would be identifiable under the true model, we aim to provide a quantity that is more tractable for testing, even if it sacrifices identification of a unique, correct CATE. While different CATEs can lead to the same marginal outcomes, we argue that this degeneracy is not a critical limitation in our setting. A lack of rejection simply indicates insufficient evidence that the model fails to generalize, instead of guaranteeing correctness of the CATE. We recognize that a model could fit an incorrect CATE while still producing accurate marginal outcomes. However, our empirical results suggest that such cases are rare in practice. Here, we appeal to a general result of the following form: the set of distributions where the COD fails to generalize but the marginal estimand does is a measure zero subset of distributions where the COD fails. This is analogous to the so-called ‘faithfulness’ argument for causal discovery algorithms \citep{spirtes2000causation}, or the `completeness' of d-separation. In finite samples we would need a stronger assumption (more analogous to `strong faithfulness' in \citealp{zhang2002strong}) to avoid such false negatives. This is beyond the scope of our paper. 


\section{Summary}\label{sec:summary}

In this paper, we develop a statistical method for evaluating the generalizability of causal inference algorithms using actual application data, facilitated by the frugal parameterization. Our approach introduces a semi-synthetic simulation framework that bridges the gap between synthetic simulations and real-world applications, supporting the generalizability evaluation of both mean and distributional regression models. With flexible, user-defined data generation processes, our framework provides a principled, binary decision about whether or not a model is generalizable to a specific domain. This is essential for model selection. In practice, our method helps structure the selection process into two stages:
\begin{itemize}
    \item Stage 1: Apply the proposed testing procedure to identify models that generalize across domains.
    \item Stage 2: Among the models that pass the test, use a metric like MSE to choose the best-performing one.
\end{itemize}

This two-stage approach ensures that model selection is both statistically sound and practically robust, as it prioritizes generalizability before performance evaluation. Following this framework, we select models that are ``good and generalizable'', rather than just ``relatively good'' without generalizability assessment via MSE alone. We provide more details of the comparison between our method and MSE in \Cref{sec:compare_with_MSE}.

Through experiments on the synthetic and IHDP datasets, we assess the generalizability of algorithms such as TARNet, CausalForest, S-/T-BART, and S-/T-engression under domain shift. Our method acts as a valuable diagnostic tool, allowing us to explore how factors like training set size or covariate shifts impact generalizability. These insights can help identify model strengths and weaknesses and inform how causal inference models adapt to different settings.

We hope that this work inspires a more careful consideration of model evaluation, encourages simulations that better reflect real-world conditions, and highlights the importance of stress testing in advancing causal inference methodologies.

\begin{acknowledgements} 
The authors would like to thank Laura Battaglia and Xing Liu for their helpful comments on the paper.

D.d.V.M is supported by a studentship from the UK’s EPSRC's Doctoral Training Partnership (EP/T517811/1). L.Y. is supported by the EPSRC Centre for Doctoral Training in Modern Statistics and Statistical Machine Learning (EP/S023151/1) and Novartis. 
\end{acknowledgements}

\newpage
\bibliography{uai2025-template}

\newpage

\onecolumn

\title{Supplementary Material}
\maketitle
\appendix
\section{COPULA BACKGROUND}\label{app:copulas}

Copulas provide a powerful tool to model joint dependencies, independent of the univariate margins. This aligns well with the requirements of the frugal parameterization, where dependencies need to be varied without altering specified margins (the most critical being the specified causal effect). Understanding the constraints and limitations of copula models ensures that causal models remain accurate and consistent with the intended parameterization.

\subsection{SKLAR'S THEOREM}
Sklar's theorem \citep{sklar1959,czado2019analyzing} provides the fundamental foundation for copula modelling by providing a bridge between multivariate joint distributions and their univariate margins. It allows one to separate the marginal behaviour of each variable from their joint dependence structure, with the latter being the copula itself.

\begin{theorem}
For a d-variate distribution function $F_{1:d} \in \mathcal{F}(F_1,\ldots,F_d)$, with $j$th univariate margin $F_j$, the copula associated with $F$ is a distribution function $C : [0,1]^d \rightarrow[0,1]$ with uniform margins on $(0,1)$ that satisfies
\begin{equation*}
    F_{1:d}(\bm{y}) = C(F_1(y_1),\dots,F_{d}(y_d)), \qquad \bm{y} \in \mathbb{R}^{d}.
\end{equation*}
\begin{enumerate}
    \item If $F$ is a continuous $d$-variate distribution function with univariate margins $F_1,\dots, F_d$ and rank functions $F^{-1}_1,\dots, F^{-1}_d$ then
    \begin{equation*}
        C(\bm{u}) = F_{1:d}(F^{-1}_1(u_1),\dots,F^{-1}_d(u_d)), \qquad \bm{u}\in[0,1]^d.
    \end{equation*}
    \item If $F_{1:d}$ is a $d$-variate distribution function of discrete random variables (more generally, partly continuous and partly discrete), then the copula is unique only on the set
    \begin{equation*}
        Range(F_1) \times \dots \times Range(F_d).
    \end{equation*}
\end{enumerate}
The copula distribution is associated with its density $c(\cdot)$,
\begin{equation*}
    f(\bm{y}) = c(F_1(y_1),\dots, F_d(y_d))\cdot f_1(y_1)\cdots f_d(y_d),
\end{equation*}
where $f_i(\cdot)$ is the univariate density function of $Y_i$. 
\end{theorem}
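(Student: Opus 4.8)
The plan is to proceed in four stages: establish existence of the copula when $F_{1:d}$ is continuous, read off the explicit formula of Part~1, settle uniqueness (distinguishing the continuous and discrete cases of Part~2), and finally differentiate the representation to obtain the stated density identity.

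First I would prove existence in the continuous case using the probability integral transform. If $Y_j \sim F_j$ with $F_j$ continuous, then $U_j := F_j(Y_j)$ is uniform on $(0,1)$. Defining $C$ to be the joint distribution function of $(U_1,\dots,U_d)$ gives a function on $[0,1]^d$ with uniform margins, and since continuity of each $F_j$ lets us pass from $\{U_j \le F_j(y_j)\}$ back to $\{Y_j \le y_j\}$,
\begin{equation*}
    C\bigl(F_1(y_1),\dots,F_d(y_d)\bigr) = \Pr(Y_1 \le y_1,\dots,Y_d \le y_d) = F_{1:d}(\bm{y}),
\end{equation*}
which is the claimed representation. Part~1 then follows by substituting $u_j = F_j(y_j)$, i.e.\ $y_j = F_j^{-1}(u_j)$ with $F_j^{-1}$ the rank (quantile) function; continuity guarantees $F_j(F_j^{-1}(u)) = u$, so the substitution is exact and yields $C(\bm{u}) = F_{1:d}(F_1^{-1}(u_1),\dots,F_d^{-1}(u_d))$.

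Uniqueness I would handle through the ranges of the margins. When every $F_j$ is continuous, $\mathrm{Range}(F_j)$ is all of $[0,1]$, so the representation pins $C$ down at every point of $[0,1]^d$ and the copula is globally unique. For Part~2, when some $F_j$ has jumps, $\mathrm{Range}(F_j)$ is a proper subset of $[0,1]$; the identity then constrains $C$ only on the product $\mathrm{Range}(F_1)\times\dots\times\mathrm{Range}(F_d)$, and off this set $C$ may be completed by any valid interpolation, which is exactly why uniqueness degrades to that set.

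The hard part will be proving existence in the discrete or mixed case, where $F_j(Y_j)$ is no longer uniform and the plain transform breaks at the jumps. Here I would invoke the distributional (randomized) transform: take $V_1,\dots,V_d$ i.i.d.\ uniform and independent of $Y$, and set $U_j := F_j(Y_j^-) + V_j\bigl(F_j(Y_j) - F_j(Y_j^-)\bigr)$, which restores uniformity of each $U_j$ across jumps; defining $C$ as the joint law of this randomized transform recovers the representation on the relevant range. I expect the careful bookkeeping at the atoms --- verifying marginal uniformity and checking the representation precisely on $\mathrm{Range}(F_1)\times\dots\times\mathrm{Range}(F_d)$ --- to be the main obstacle. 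Finally, assuming $F_{1:d}$ and each $F_j$ admit densities, I would differentiate $F_{1:d}(\bm{y}) = C(F_1(y_1),\dots,F_d(y_d))$ in $y_1,\dots,y_d$ and apply the chain rule, obtaining $f(\bm{y}) = c\bigl(F_1(y_1),\dots,F_d(y_d)\bigr)\prod_{j=1}^d f_j(y_j)$ with $c = \partial^d C/\partial u_1\cdots\partial u_d$.
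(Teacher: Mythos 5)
The paper does not prove this statement at all: it is quoted verbatim as Sklar's theorem, with citations to Sklar (1959) and Czado (2019), as background for the copula machinery used in the frugal parameterization. So there is no paper proof to compare against, and your proposal must be judged on its own merits. On those merits it is sound, and it is in fact the standard modern proof: the probability integral transform handles the continuous case, the quantile substitution $u_j = F_j(y_j)$ with $F_j(F_j^{-1}(u)) = u$ gives Part~1, the range argument gives Part~2, and your treatment of the discrete/mixed case via the randomized transform $U_j := F_j(Y_j^-) + V_j\bigl(F_j(Y_j) - F_j(Y_j^-)\bigr)$ is precisely R\"uschendorf's distributional-transform proof of Sklar's theorem --- machinery the paper itself quotes (also without proof, from the same R\"uschendorf 2009 reference) in its appendix on copulas for discrete variables. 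You correctly identify where the real work lies, namely the bookkeeping at atoms showing $F_j^{-1}(U_j) = Y_j$ almost surely and that $\{U_j \le F_j(y_j)\}$ and $\{Y_j \le y_j\}$ differ only by a null event.

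Two small points to tighten in a full write-up. First, in the continuous case the passage from $\{U_j \le F_j(y_j)\}$ back to $\{Y_j \le y_j\}$ is not a pointwise equivalence when $F_j$ has flat pieces (continuity does not imply strict monotonicity); the two events can differ, but only on the set where $Y_j$ lands in a flat region above $y_j$, which has probability zero, so the identity holds as stated --- you should make that almost-sure argument explicit rather than attributing it to continuity alone. Second, for Part~2 the claim that $C$ ``may be completed by any valid interpolation'' needs an actual witness: one must exhibit at least one extension off $\mathrm{Range}(F_1)\times\dots\times\mathrm{Range}(F_d)$ that is a genuine copula (the multilinear, or checkerboard, extension is the standard choice, and verifying it is $d$-non-decreasing is a short but nontrivial check), and then note that when some margin has a gap in its range there is more than one such extension, which is exactly the failure of uniqueness. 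Neither point changes your architecture; both are fill-in-the-details items.
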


Note that Sklar's theorem explicitly refers to the \textbf{univariate marginals} of the variable set $\{Y_1,\dots, Y_d\}$ to convert between the joint of univariate margins $C(\bm{u})$ and the original distribution $F(\bm{y})$. For absolutely continuous random variables, the copula function $C$ is unique. This uniqueness no longer holds for discrete variables, but this does not severely limit the applicability of copulas to simulating from discrete distributions.

An equivalent definition (from an analytical purview) is $C: [0, 1]^d \rightarrow [0, 1]$ is a $d$-dimensional copula if it has the following properties: 
\begin{enumerate}
    \item $C(u_1,\dots, 0, \dots, u_d) = 0$;
    \item $C(1, \dots, 1, u_i, 1, \dots, 1) = u_i$;
    \item $C$ is $d$-non-decreasing.
\end{enumerate}
\begin{definition}
    A copula $C$ is $d$-non-decreasing if, for any hyper-rectangle $H=\prod_{i=1}^{d}\left[u_i, y_i \right]\subseteq [0,1]^{d}$, the $C$-volume of $H$ is non-negative
    \begin{equation*}
        \int_{H}C(\bm{u})~d\bm{u} \geq 0.
    \end{equation*}
\end{definition}

\subsection{COPULAS FOR DISCRETE VARIABLES}\label{appsub:discrete-copulas}

Modelling the dependency between discrete and mixed data is particularly challenging, as copulas for discrete variables are not unique. Additionally, copulas encode ordering in the joint, and hence should only be used for count or ordinal data models. 
In order to deal with discrete variables, we use a the Generalized Distributional Transform of a random variable found originally proposed by \citet{ruschendorf2009distributional}.

\begin{theorem}
On a probability space $(\Omega, \mathcal{A}, P)$ let $X$ be a real random variable with distribution function $F$ and let $V \sim U(0, 1)$ be uniformly distributed on $(0, 1)$ and independent of $X$. The \textit{modified distribution function} $F(x, \lambda)$ is defined by
\begin{equation*}
F(x, \lambda) := P(X < x) + \lambda P(X = x).
\end{equation*}
We define the (generalized) \textit{distributional transform} of $X$ by
\begin{equation*}
U := F(X, V).
\end{equation*}
An equivalent representation of the distributional transform is
\begin{equation*}
U = F(X-) + V(F(X) - F(X-)).
\end{equation*}
\end{theorem}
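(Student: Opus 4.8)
The plan is to establish the equivalence by a purely deterministic unfolding of the definition of the modified distribution function $F(x,\lambda)$ in terms of the ordinary distribution function and its left limit, followed by a substitution of the random pair $(X,V)$. No statement about the law of $U$ is needed here; the claim is an identity of random variables, not a distributional result, so the argument is essentially algebraic.

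First I would record the two standard identities linking the probabilities in the definition of $F(x,\lambda)$ to the distribution function. Writing $F(x) = P(X \le x)$ for the right-continuous distribution function and $F(x-) := \lim_{t \uparrow x} F(t)$ for its left limit, one has $P(X < x) = F(x-)$ and hence $P(X = x) = P(X \le x) - P(X < x) = F(x) - F(x-)$. Substituting these into the definition $F(x,\lambda) := P(X < x) + \lambda\, P(X = x)$ gives, for every $x \in \mathbb{R}$ and every $\lambda \in [0,1]$,
\begin{equation*}
    F(x,\lambda) = F(x-) + \lambda\bigl(F(x) - F(x-)\bigr).
\end{equation*}
This is the entire content of the identity at the level of deterministic arguments.

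Next I would pass from this pointwise relation to the claimed representation of $U$. Since the displayed equality holds for every fixed pair $(x,\lambda) \in \mathbb{R} \times [0,1]$, it holds in particular when $x$ and $\lambda$ are replaced by the maps $X$ and $V$; composing both sides with $\omega \mapsto (X(\omega), V(\omega))$ yields
\begin{equation*}
    U = F(X,V) = F(X-) + V\bigl(F(X) - F(X-)\bigr)
\end{equation*}
everywhere on $\Omega$, which is exactly the asserted equivalent representation.

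The only point requiring care — and the nearest thing to an obstacle — is justifying the substitution of $X$ and $V$ into an identity that a priori holds only for deterministic arguments. This is settled by observing that the relation above is an equality of two functions on $\mathbb{R} \times [0,1]$, so composing with the jointly measurable map $(X,V)$ preserves it at every sample point; the independence of $X$ and $V$ and the uniform law of $V$ play no role in this particular claim, being needed only for the separate fact that $U$ is uniformly distributed. I would also note in passing that $x \mapsto F(x-)$ is left-continuous and Borel measurable, so each term is a genuine random variable and the right-hand side is well defined.
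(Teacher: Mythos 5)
Your proof is correct: the identities $P(X<x)=F(x-)$ and $P(X=x)=F(x)-F(x-)$ reduce the claim to a pointwise algebraic identity on $\mathbb{R}\times[0,1]$, which is then preserved under composition with the measurable map $\omega\mapsto(X(\omega),V(\omega))$, and you rightly observe that independence and the uniform law of $V$ are irrelevant to this particular assertion. Note that the paper itself offers no proof — it quotes the theorem verbatim from R\"uschendorf (2009) — so there is no in-paper argument to compare against; your derivation is the standard and essentially only natural one for this identity.
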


\citet{ruschendorf2009distributional} makes a key remark about the generalized transform's lack of uniqueness for discrete variables. 

\subsection{PAIR COPULA CONSTRUCTIONS AND VINE COPULAS}\label{app:vinecop}
Pair copula constructions (PCCs) provide a flexible framework for modelling multivariate dependence by decomposing a high-dimensional copula into a sequence of bivariate copulas~\citep{bedford2002}. A vine copula is a specific class of PCCs that employs a graphical model to structure these pairwise dependencies, extending traditional copulas to describe complex dependency structures in high-dimensional data. 

Vine copulas allow for flexible modelling of more complex conditional dependence structures, enabling a richer representation of statistical relationships. This flexibility makes vine copulas particularly useful when modelling more complex multivariate distributions where different pairwise interaction types and conditional dependencies must be specified~\citep{czado2022vine,czado2019analyzing}.
Vine copulas extend this concept by decomposing a multivariate copula into a sequence of bivariate copulas arranged in a hierarchical structure. This decomposition enables the flexible modelling of dependencies among variables while preserving computational tractability.

There is a vast literature in showing how vines can parametrize different dependency structures, and allow for more complex and richer dependencies to be expressed using different vine tree structures and choices of copula families for each of the bivariate copulas in the vine.

The hierarchical organization of dependencies in vine copulas is achieved through a sequence of trees $\{T_1, T_2, \dots\, T_{K}\}$. Each tree consists of nodes and edges that represent variables and their dependencies, respectively. The first tree $T_1$ defines the marginal pairwise dependencies between variables. Each subsequent tree $T_k$ defines the dependencies conditional on the edges of the previous tree $T_{k-1}$. Each edge in $T_k$ is associated with a bivariate copula that models the conditional dependency between two variables. Mathematically, the joint density defined over a set of $d$ marginally uniform random variables, $c(u_1, \dots, u_d)$ of a vine copula can be expressed as:
\begin{equation}
c(u_1, \dots, u_d) = \prod_{k=1}^{d-1} \prod_{(i,j) \in E_k} c_{ij|D_{ij}}(u_i, u_j | u_{D_{ij}}),
\end{equation}
where $E_k$ represents the edges in the $k$th tree, and $D_{ij}$ denotes the conditioning set for the pair $(i, j)$.

Vine copulas model complex dependencies by combining bivariate copulas---such as Gaussian, Clayton, Gumbel, or Frank---that capture various types of correlation, including tail dependence and forms of asymmetry. The tree structure defines the choice of the order of dependence, and parameters are estimated from empirical data or assumptions. Their main strength lies in decomposing high-dimensional problems into tractable lower-dimensional components, enabling efficient sampling and inference.  It does all this while preserving computational tractability. 
%
%
In our experimental framework, we leverage these properties to evaluate the impact of different dependency structures on causal inference generalizability.

\subsection{FITTING AND CUSTOMIZING FRUGAL COPULA FITS}\label{subapp:fitting-copulas}
Vine copulas allow for a great deal of flexibility for customizing complex variable dependency structures in addition to efficient method for fitting real world datasets.

\paragraph{High Dimensional Covariate Fits} For real world or semi-synthetic data examples, we recommend the use of vine copulas for higher-dimensional and more complex dependency structures. For model selection, a popular choice is the \emph{Dissmann algorithm}, which fits vine copulas iteratively from the lowest tree level upwards~\citep{dissmann2013selecting}; the choice of bivariate copula families can be performed afterwards. We use the implementation in rvinecopulib \citep{rvinecopulib2025}, which performs structure selection and optimal bivariate copula family fitting. For more flexible nonparametric alternatives, we also highlight frugal flows as a viable generative model for learning expressive causal marginals via normalizing flows, although its performance suffers in very high dimensional settings \citep{de2024marginal}.

\paragraph{Computational Efficiency of Vine Copula Fits} Fitting vine copula models is not computationally prohibitive, even in high dimensional covariate settings. To further aid reproducibility and assess feasibility, \Cref{tab:copula_fitting_times} presents the time taken to fit vine copulas (both structure and bivariate family selection) across different dimensions, using a dataset with $N=200$ and a MacBook Pro M1 Pro, 2023. The results are averaged over 10 different fits.
\begin{table}[h!]
\centering
\begin{tabular}{c r r r r r}
\hline
\multirow{2}{*}{\textbf{Sample Size}} & \multicolumn{5}{c}{$\boldsymbol{D}$}\\
& \multicolumn{1}{c}{$\mathbf{10}$} & \multicolumn{1}{c}{$\mathbf{25}$} & \multicolumn{1}{c}{$\mathbf{50}$} & \multicolumn{1}{c}{$\mathbf{100}$} & \multicolumn{1}{c}{$\mathbf{200}$} \\
\hline
10   & 0.13 $\pm$ 0.01 & 0.90 $\pm$ 0.25 & 3.5 $\pm$ 0.04 & 14 $\pm$ 0.07 & 61 $\pm$ 1.40 \\
25   & 0.21 $\pm$ 0.01 & 1.40 $\pm$ 0.04 & 5.7 $\pm$ 0.09 & 23 $\pm$ 0.07 & 94 $\pm$ 0.36 \\
50   & 0.37 $\pm$ 0.01 & 2.45 $\pm$ 0.05 & 10.0 $\pm$ 0.09 & 41 $\pm$ 0.25 & 169 $\pm$ 4.56 \\
100  & 0.71 $\pm$ 0.07 & 4.66 $\pm$ 0.09 & 19.1 $\pm$ 0.23 & 77 $\pm$ 0.46 & 314 $\pm$ 2.54 \\
200  & 1.50 $\pm$ 0.10 & 9.62 $\pm$ 0.16 & 39.4 $\pm$ 0.41 & 158 $\pm$ 3.19 & 625 $\pm$ 2.88 \\
\hline
\end{tabular}

\caption{Computation time (seconds) for vine copula fitting across dimensions $D$ and sample sizes, on a MacBook Pro M1 Pro (2023). The results were averaged over 10 different datasets, per dimension/datasize pair.}
\label{tab:copula_fitting_times}
\end{table}

\section{MODELS}
\label{sec:models}

We provide details of the models evaluated in our paper.

\paragraph{Engression} Engression, proposed in \cite{shen2023engression}, approximates the conditional distribution $Y\mid X$ using a pre-additive noise model $Y = g(WX + \eta) + \beta^\top X$, where $g: \mathbb{R}^d \rightarrow \mathbb{R}$ is a non-linear function that captures non-linear relationships and $\eta = h(\epsilon)$ introduces flexible noise. Built on a neural network architecture that efficiently learns this structure, it optimizes the energy score loss for accurate distributional regression.

\paragraph{Meta-learners}
Meta-learners are flexible frameworks in causal inference designed to estimate individualized treatment effects by leveraging machine learning models. Two common types are T-learners and S-learners. Details can be found in \cite{kunzel2019metalearners}.

T-learners work by training separate models for the treated and untreated groups, predicting outcomes under each treatment condition, and then calculating the difference between these predictions to estimate the treatment effect.
S-learners combine both treated and untreated data into a single model by including treatment as an input feature, allowing the model to learn the outcome function across both treatment conditions simultaneously.
These learners provide a modular approach to estimating conditional average treatment effects (CATE) and can adapt to different settings and model complexities.
\paragraph{CausalForest}

CausalForest is an extension of random forests designed to estimate heterogeneous treatment effects by partitioning the data into subgroups with similar treatment responses. Introduced by \cite{wager2018estimation}, it uses a tree-based ensemble method to non-parametrically estimate the CATE by building separate models for different covariate regions, while ensuring a balance between treated and control units in each partition. This method is flexible and adapts to complex data structures, making it a powerful tool for understanding treatment effect heterogeneity.

\paragraph{BART} Bayesian Additive Regression Trees, first introduced in  \cite{chipman2010bart}, is a non-parametric machine learning method that uses an ensemble of regression trees to model complex relationships between covariates and outcomes.  The BART model estimates the posterior distribution of the outcome by summing the contributions from many trees, each of which is trained to explain part of the residual error left by the others. This ensemble approach makes BART particularly effective at capturing complex, non-linear relationships between the covariates and the outcome. Unlike standard decision trees, BART applies a Bayesian framework, allowing it to quantify uncertainty in its predictions and avoid overfitting through regularization priors.

\paragraph{TARNet} Treatment-Agnostic Representation Network, first introduced in \cite{johansson2016learning}, is a neural network-based model for estimating heterogeneous treatment effects in causal inference. It works by learning a shared representation of covariates, independent of treatment assignment, and then using this representation to estimate potential outcomes for both the treated and untreated groups. By focusing on treatment-agnostic representation learning, TARNet aims to improve the generalizability and accuracy of treatment effect estimates, particularly in high-dimensional settings.

\section{COMPUTATION DETAILS}
\label{sec:computation_details}
We provide computation details in \Cref{sec:experiments}. We use default recommended hyperparameters for each model.

\begin{table}[h]
\caption{Hyperparameters of Each Model.} 
\label{tab:hyperparameter}
\begin{center}
\begin{tabular}{l|p{6cm}|p{5cm}}
\toprule
\textbf{Model} & \textbf{Key Hyperparameters} & \textbf{Package} \\
\midrule
TARNet & \begin{minipage}{5.5cm} number of layers = 2\\
batch size = 64\\
learning rate = 0.0001\\
number of epochs = 2000 \end{minipage} & \begin{minipage}{5cm} Python\\ \texttt{catenets} \citep{curth2021really} \end{minipage}\\
\cmidrule{1-3}
CausalForest & \begin{minipage}{5.5cm} number of trees = 100\\
maximum depth = 3
\end{minipage} & \begin{minipage}{5.5cm}
Python, \texttt{econml}\\ \citep{econml} \end{minipage}\\
\cmidrule{1-3}
S-/T-BART & \begin{minipage}{5.5cm} number of trees = 75\\ number of iterations = 4\\ 
number of burn-in iterations = 200\\ posterior draws = 800
\end{minipage} & R, \texttt{dbarts} \citep{dbarts} \\
\cmidrule{1-3}
S-/T-engression & \begin{minipage}{5.5cm} number of layers = 3\\ 
batch size = 64\\ 
learning rate = 0.01 \\
number of epochs = 500 \end{minipage} & \begin{minipage}{5cm}
Python, \texttt{engression}\\ \citep{engression}\end{minipage}\\
\bottomrule
\end{tabular}
\end{center}
\end{table}

All experiments were conducted on a MacBook with an Apple M3 chip, 8-core CPU, and 32GB RAM.

\section{ADDITIONAL EXPERIMENTS}
\label{sec:additional_exp}
\subsection{IN-DOMAIN MODEL PERFORMANCE TESTING ON THE IHDP DATASET}
\label{sec:indomain}
Although our proposed method mainly tackles the out-of-domain generalizability assessment, which is a challenging task as demonstrated in \Cref{sec:generalizability_in_causal_inference},  it can be easily adapted to performance evaluation for in-domain tasks. As an illustration, we present the in-domain test results and MSE for the IHDP dataset, using the same experimental setup as in \Cref{sec:IHDP} but without introducing any domain shift (i.e.~$Z_1$ remains unchanged in the test domain) in \Cref{fig:ihdp_indomain}.

\begin{figure}[H]
\vspace{.3in}
\centerline{\includegraphics[width=0.8\linewidth]{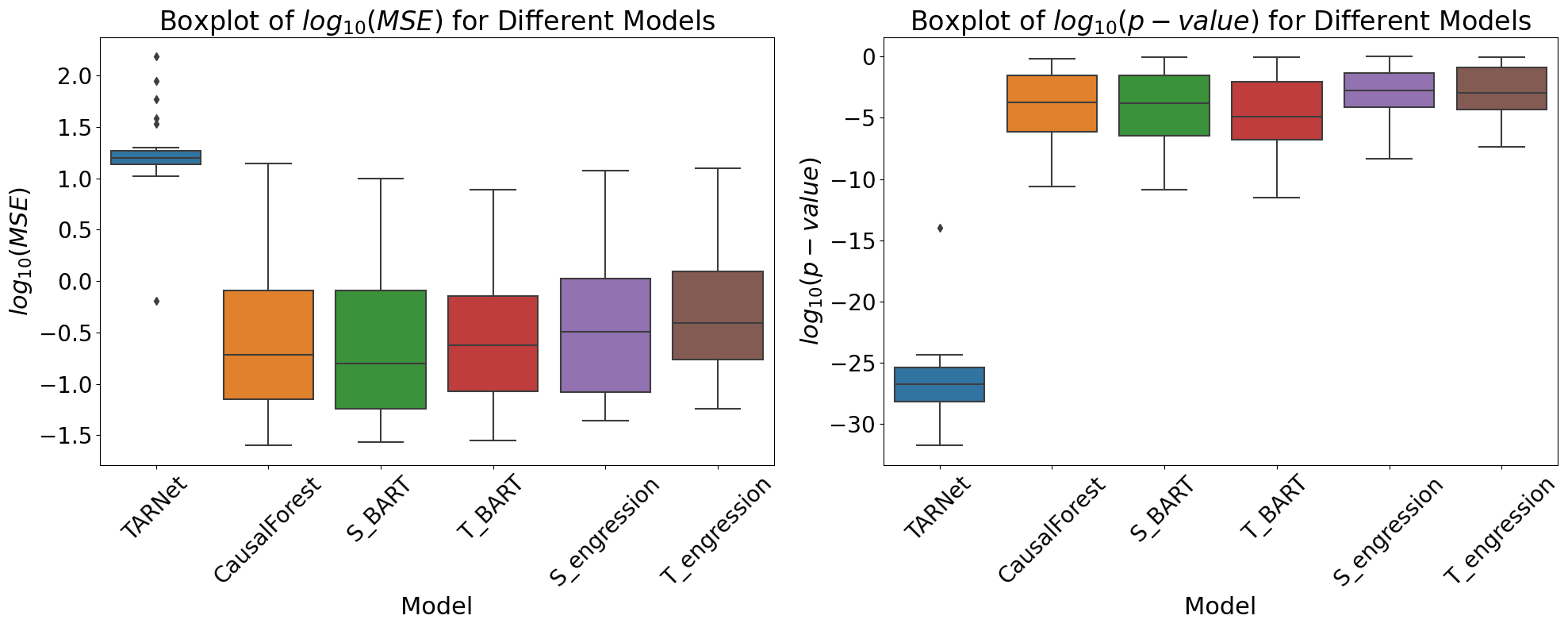}}
\vspace{.3in}
\caption{$\log_{10}(\text{MSE})$ and $\log_{10}(\text{p-value})$ of Mean Regression Testing on the IHDP Dataset, No Domain Shift.}
\label{fig:ihdp_indomain}
\end{figure}

\Cref{fig:ihdp_indomain} demonstrates the contrasts of $\log_{10}(\text{MSE})$ and $\log_{10}(\text{p-value})$ performance assessment results.Each model was trained with its default hyperparameters, and we evaluated them under those same conditions. The test results therefore reflect each model’s generalizability given its default settings. As expected, we see alignments of the MSE and tests results: TARNet (with default hyper-parameter settings) exhibits large MSE, and the p-values are generally very small. Meanwhile, S-engression and T-engression yield comparatively lower MSEs; however, MSE alone can be insufficiently persuasive. By incorporating p-values and the corresponding statistical guarantees offered by our method, we can make stronger assertions about the generalizability of these two engression approaches. These findings emphasize the usefulness and significance of our proposed method in model assessment, as discussed at the end of \Cref{sec:generalizability_in_causal_inference}.

\subsection{TESTING GENERALIZABLE MODELS}
\label{sec:linear}
We include an additional experiment in this section, which is based on the synthetic data setting in \Cref{sec:synthetic}, but without domain shift. We set the marginal distribution of $Z_1$, $Z_2$ to be $\mathcal{N}(1,1)$, and $Y(X) \sim \mathcal{N}(2X+1,1)$, $X\sim \operatorname{Bernoulli} (0.5)$. In this case, the conditional average treatment effect should be linear. 

The result when there is no domain shift can be found in \Cref{fig:synthetic_mean_p_noshift}. We see that the p-values of both S-Linear (Regression) and T-Linear (Regression) are uniformly distributed. Given the true CATE function is indeed linear, this result validates our proposed method.

\begin{figure}[H]
\vspace{.3in}
\centerline{\includegraphics[width=0.5\linewidth]{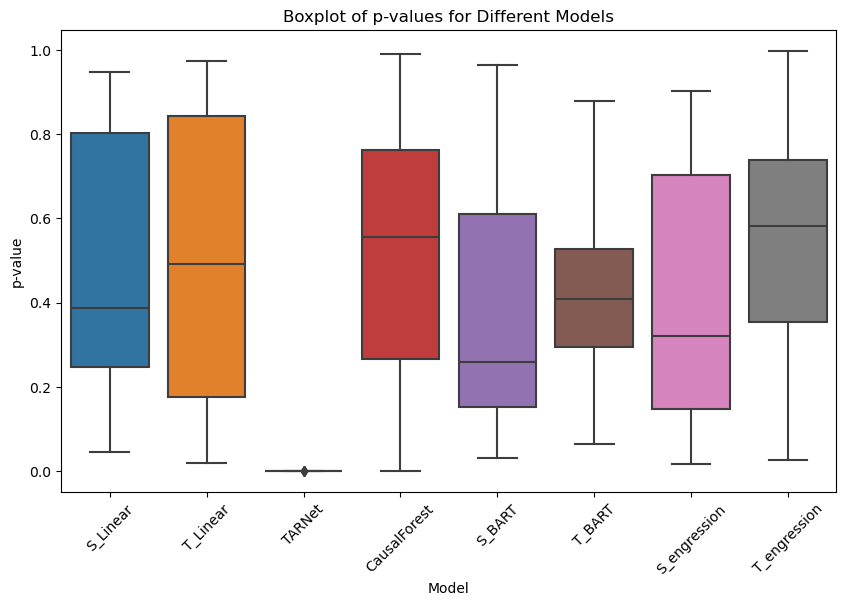}}
\vspace{.3in}
\caption{$p$-values of Mean Regression Testing, Synthetic Data of 50 Iterations, No Domain Shift.}
\label{fig:synthetic_mean_p_noshift}
\end{figure}

We next test when there is domain shift, i.e.~we keep all the settings the same as above for training set, but we change the marginal distribution of $Z_1$, $Z_2$ in the test set to be $\mathcal{N}(3,2)$. \Cref{fig:synthetic_mean_p_shift} shows the results. Linear regressions still demonstrate good generalizability performance! However for algorithms like S-engression and S-BART the results worsen, likely due to problems such as overfitting.

\begin{figure}[H]
\vspace{.3in}
\centerline{\includegraphics[width=0.5\linewidth]{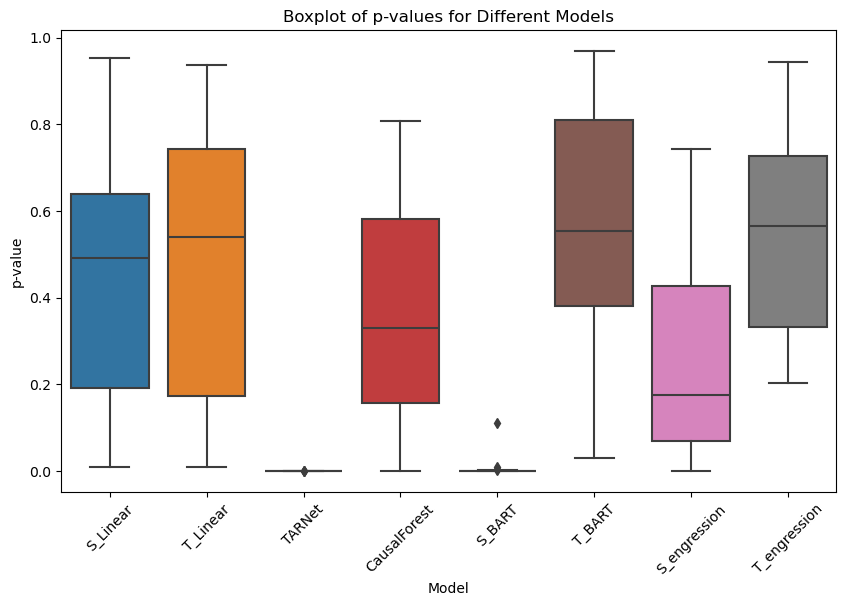}}
\vspace{.3in}
\caption{$p$-values of Mean Regression Testing, Synthetic Data of 50 Iterations, with Domain Shift.}
\label{fig:synthetic_mean_p_shift}
\end{figure}

\subsection{MORE COMPLICATED DATA GENERATION}
\label{sec:complicated_exp}
 To demonstrate the flexibility of our approach, we run additional experiments across different data generation settings, including increasing number of covariates, changing marginal distributions and changing dependency structures.
 
Tables \ref{tab:50_covariates} and \ref{tab:100_covariates} show the $\log_{10}(\text{p-value})$ statistics from 50 trials conducted under a setting similar to Synthetic Setting 1 in the main body of our paper, with only two changes: (1) we increase the number of covariates from 2 to 50 and 100, and keep the covariate distribution shifts the same for each covariate; (2) we replace the dependency structures with randomly sampled correlation matrices. CausalForest, T-engression and T-BART demonstrate good generalizability in these settings.

\begin{table}[H]
\centering
\caption{$\log_{10}(p\text{-values})$ Statistics under Synthetic Setting 1 with 50 Covariates.}
\begin{tabular}{rrrrrrr} 
\toprule 
\textbf{Model} & \textbf{Min} & \textbf{25\%} & \textbf{Median} & \textbf{Mean} & \textbf{75\%} & \textbf{Max} \\
\midrule 
TARNet & $-36.8$ & $-33.6$ & $-32.4$ & $-31.6$ & $-31.5$ & $-30.8$ \\ 
CausalForest & $-2.35$ & $-1.22$ & $-0.851$ & $-0.539$ & $-0.214$ & $-0.077$ \\ 
S-BART & $-8.32$ & $-4.22$ & $-3.36$ & $-2.58$ & $-2.66$ & $-1.92$ \\ 
T-BART & $-1.52$ & $-0.757$ & $-0.326$ & $-0.349$ & $-0.187$ & $-0.044$ \\ 
S-engression & $-20.0$ & $-18.2$ & $-17.3$ & $-14.4$ & $-16.7$ & $-13.1$ \\ 
T-engression & $-2.53$ & $-0.669$ & $-0.283$ & $-0.324$ & $-0.211$ & $-0.006$ \\ 
\bottomrule
\end{tabular}
\label{tab:50_covariates}
\end{table}

\begin{table}[H]
\centering
\caption{$\log_{10}(p\text{-values})$ Statistics under Synthetic Setting 1 with 100 Covariates.}
\begin{tabular}{rrrrrrr} \toprule 
\textbf{Model} & \textbf{Min} & \textbf{25\%} & \textbf{Median} & \textbf{Mean} & \textbf{75\%} & \textbf{Max} \\
\midrule 
 TARNet & $-34.3$ & $-31.3$ & $-30.9$ & $-29.9$ & $-30.3$ & $-29.0$ \\ 
 CausalForest & $-2.39$ & $-1.35$ & $-0.821$ & $-0.670$ & $-0.479$ & $-0.122$ \\ 
 S-BART & $-9.62$ & $-7.32$ & $-6.68$ & $-5.22$ & $-5.99$ & $-3.96$ \\ 
 T-BART & $-1.04$ & $-0.76$ & $-0.36$ & $-0.31$ & $-0.12$ & $0.00$ \\ 
 S-engression & $-28.6$ & $-26.1$ & $-25.3$ & $-23.6$ & $-24.0$ & $-22.6$ \\ 
 T-engression & $-2.46$ & $-0.663$ & $-0.393$ & $-0.366$ & $-0.137$ & $-0.107$ \\ 
 \bottomrule
 \end{tabular}
\label{tab:100_covariates}
\end{table}

\Cref{tab:non-linear} present the $\log_{10}(p\text{-values})$ statistics from 50 trials under the same setup as the first experiment in \ref{sec:linear} except for altering the marginal causal distribution. Changing this from Gaussian to gamma introduces non-linear dependencies in the conditional causal margin. While linear regression was generalizable in the original setup, it fails in the non-linear setting, demonstrating the ability of our approach to show that some methods fail to generalize well.

\begin{table}[H]
\centering
\caption{$\log_{10}(p\text{-values})$ Statistics under the Same Set-up as \Cref{fig:synthetic_mean_p_noshift} with Non-linear Dependency. }
\begin{tabular}{rrrrrrr} 
\toprule 
\textbf{Model} & \textbf{Min} & \textbf{25\%} & \textbf{Median} & \textbf{Mean} & \textbf{75\%} & \textbf{Max} \\
\midrule 
S-Linear & $-16.2$ & $-13.0$ & $-11.7$ & $-10.9$ & $-10.8$ & $-9.93$ \\ 
T-Linear & $-13.3$ & $-11.2$ & $-10.6$ & $-9.67$ & $-10.0$ & $-8.61$ \\ 
TARNet & $-24.0$ & $-21.7$ & $-21.2$ & $-19.5$ &$ -19.9$ & $-18.6$ \\ 
CausalForest & $-12.2$ & $-10.8$ & $-10.2$ & $-9.16$ & $-9.34$ & $-8.23$ \\ 
S-BART & $-13.4$ & $-10.3$ & $-9.33$ & $-7.60$ & $-8.50$ & $-6.36$ \\ 
T-BART & $-11.6$ & $-8.40$ & $-7.84$ & $-6.38$ & $-7.49$ & $-5.12$ \\ 
S-engression & $-12.5$ & $-9.89$ & $-9.45$ & $-8.01$ & $-8.17$ & $-7.13$ \\ 
T-engression & $-9.69$ & $-7.32$ & $-6.83$ & $-5.23$ & $-6.32$ & $-3.99$ \\ 
\bottomrule
\end{tabular}
\label{tab:pvalue_statistics}

\label{tab:non-linear}
\end{table}

A strength of our framework is that vine copula allows users to test their methods against various classes of copulas. We demonstrate this in \Cref{tab:non-gaussian_copula} with the following data generating process:

\begin{itemize}
    \item Training Domain: Covariates' marginal distributions are identical gamma distributions with shape $k=8$ and rate $\theta=4$;
    \item Testing Domain: Covariates' marginal distributions are identical gamma distributions with shape $k=2$ and rate $\theta=1$;
    \item Marginal Causal Distribution: Modelled as an exponential distribution with $k=0.5x+0.1$;
    \item Treatment Assignment: Specified as $ X\sim \operatorname{Bernoulli} (0.5)$;
    \item Copula: Randomly sampled R-vine structure, with each bivariate copula set to be a Clayton copula \citep{kreinovich2013clayton} with a parameter of 2. 
\end{itemize}

\begin{table}[H]
\caption{$\log_{10}(p\text{-values})$ Statistics for Experiment with a Non-Gaussian Copula. }
\centering
\begin{tabular}{rrrrrrr} 
\toprule 
\textbf{Model} & \textbf{Min} & \textbf{25\%} & \textbf{Median} & \textbf{Mean} & \textbf{75\%} & \textbf{Max} \\ 
\midrule 
S-Linear & $-\infty$ & $-5.71$ & $-4.92$ & $-3.89$ & $-4.15$ & $-2.81$ \\ T-Linear & $-\infty$ & $-3.47$ & $-2.64$ & $-1.87$ & $-1.94$ & $-0.929$ \\ TARNet & $-18.3$ & $-15.9$ & $-14.5$ & $-12.5$ & $-13.7$ & $-11.2$ \\ CausalForest & $-10.9$ & $-3.53$ & $-2.81$ & $-2.35$ & $-2.23$ & $-1.49$ \\ S-BART & $-\infty$ & $-4.12$ & $-3.47$ & $-2.91$ & $-2.99$ & $-2.04$ \\ T-BART & $-\infty$ & $-4.14$ & $-3.24$ & $-2.62$ & $-2.62$ & $-1.73$ \\ S-engression & $-15.8$ & $-4.06$ & $-3.04$ & $-2.35$ & $-2.63$ & $-1.54$ \\ T-engression & $-10.2$ & $-3.70$ & $-2.28$ & $-1.95$ & $-1.70$ & $-1.40$ \\ 
\bottomrule
\end{tabular}

\label{tab:non-gaussian_copula}
\end{table}

 \Cref{tab:gaussian_copula} shows the $\log_{10}(p\text{-values})$ of testing generalizability results with data generated from a Gaussian copula. The covariate margins, the causal margins, the dependency structure, and the second moments of each bivariate copula are identical to the previous example. We choose the rank correlation coefficient of the Gaussian copula, $
\rho = \frac{\theta}{2+\theta}$, where $\theta$ parameterizes the Clayton copula; this was set as 2 in the previous example. The only difference between the two processes is the class of the copula family. The $-\infty$ in Tables \ref{tab:non-gaussian_copula} and \ref{tab:gaussian_copula} are due to the original $p$-values being 0. 

\begin{table}[H]
\centering
\caption{$\log_{10}(p\text{-values})$ for Experiment with the Same Setting as in \Cref{tab:non-gaussian_copula}, but with a Gaussian Coupla.}
\begin{tabular}{rrrrrrr} 
\toprule 
\textbf{Model} & \textbf{Min} & \textbf{25\%} & \textbf{Median} & \textbf{Mean} & \textbf{75\%} & \textbf{Max} \\
\midrule 
S-Linear & $-\infty$ & $-5.19$ & $-4.68$ & $-3.52$ & $-3.49$ & $-2.73$ \\ T-Linear & $-\infty$ & $-2.59$ & $-1.83$ & $-1.42$ & $-1.45$ & $-0.512$ \\ TARNet & $-21.3$ & $-15.6$ & $-14.3$ & $-13.5$ & $-13.6$ & $-12.7$ \\ CausalForest & $-5.45$ & $-3.81$ & $-2.94$ & $-1.70$ & $-2.01$ & $-0.517$ \\ S-BART & $-\infty$ & $-3.48$ & $-2.85$ & $-2.13$ & $-2.09$ & $-1.20$ \\ T-BART & $-\infty$ & $-2.85$ & $-2.17$ & $-1.74$ & $-1.44$ & $-1.07$ \\ S-engression & $-11.4$ & $-3.16$ & $-2.62$ & $-1.82$ & $-1.68$ & $-1.10$ \\ T-engression & $-9.24$ & $-2.61$ & $-1.51$ & $-1.08$ & $-0.921$ & $-0.322$ \\ 
\bottomrule
\end{tabular}

\label{tab:gaussian_copula}
\end{table}

Contrasting Tables \ref{tab:non-gaussian_copula} and \ref{tab:gaussian_copula} shows that model generalizability is sensitive to copula families. Therefore, the flexibility of simulating data from different copula families, which is a key advantage of our current parametric framework, is important for model generalizability evaluation. We would also like to emphasize that in this paper we simulate from frugal models parametrically, but there are methods that which can flexibly model copulas without parametric assumptions \citep{de2024marginal}, and others may not require copulas at all.

\subsection{Equivalence Testing}
\label{sec:equiv_testing}
Our framework is flexible and naturally accommodates equivalence testing. Note that equivalence testing can be restrictive due to its need to define an additional hyperparameter---the equivalence margin---which can influence test outcomes. However, in certain applications, such as those requiring guarantees about not overlooking non-generalizable models, equivalence testing (e.g.~TOST: two one-sided tests) can be more appropriate. Here, the null hypothesis becomes $H_0$: $|\hat{\tau}^B-\tau^B|\geq \delta$, and the Type I error corresponds to the risk of falsely concluding that the model generalizes. We provide additional experiment results on the synthetic datasets. 

On synthetic data, we report TOST results for two margins, $\delta=0.1$ and $\delta=0.2$, using the same bootstrap configuration
as \Cref{alg:mean_test_algo} ($N_A=200$, $N_B=50$, $N_{btp}=200$, 50 repetitions).

\begin{table}[ht]
\centering
\caption{Synthetic setting 1, $\delta = 0.1$ (TOST $p$-values)}\label{tab:tost_syn1_d01}
\begin{tabular}{lcccc}
\toprule
\textbf{Model} & \textbf{Min} & \textbf{Median} & \textbf{Mean} & \textbf{Max}\\
\midrule
TARNet        & 1.00 & 1.00 & 1.00 & 1.00 \\
CausalForest  & $1.00\times10^{-6}$ & $1.75\times10^{-4}$ & $8.49\times10^{-3}$ & 0.119 \\
S-BART       & $4.00\times10^{-6}$ & $3.88\times10^{-4}$ & $9.04\times10^{-3}$ & 0.126 \\
T-BART       & $1.83\times10^{-4}$ & $4.86\times10^{-3}$ & $3.53\times10^{-2}$ & 0.381 \\
S-engression & $3.72\times10^{-4}$ & $8.13\times10^{-2}$ & 0.133 & 0.717 \\
T-engression & $7.70\times10^{-4}$ & $2.31\times10^{-2}$ & $3.93\times10^{-2}$ & 0.226 \\
\bottomrule
\end{tabular}
\label{tab:synthetic1_d01_equiv}
\end{table}

\begin{table}[ht]
\centering
\caption{Synthetic setting 1, $\delta = 0.2$ (TOST p-values)}\label{tab:tost_syn1_d02}
\begin{tabular}{lcccc}
\toprule
\textbf{Model} & \textbf{Min} & \textbf{Median} & \textbf{Mean} & \textbf{Max}\\
\midrule
TARNet        & 1.00 & 1.00 & 1.00 & 1.00 \\
CausalForest  & $5.55\times10^{-16}$ & $1.20\times10^{-10}$ & $5.10\times10^{-9}$ & $1.51\times10^{-7}$ \\
S-BART       & $1.65\times10^{-14}$ & $6.18\times10^{-11}$ & $8.02\times10^{-8}$ & $1.60\times10^{-6}$ \\
T-BART       & $2.57\times10^{-10}$ & $6.45\times10^{-8}$ & $6.25\times10^{-5}$ & $1.96\times10^{-3}$ \\
S-engression & $1.66\times10^{-11}$ & $3.31\times10^{-6}$ & $1.26\times10^{-4}$ & $2.87\times10^{-3}$ \\
T-engression & $2.76\times10^{-9}$  & $1.73\times10^{-5}$ & $2.64\times10^{-4}$ & $2.80\times10^{-3}$ \\
\bottomrule
\end{tabular}
\label{tab:synthetic1_d02_equiv}
\end{table}

 Setting $\delta =0.2$ means the null hypothesis allows for a wider range of acceptable discrepancy than $\delta = 0.1$.  As a result, we expect higher rejection rates (or smaller p-values) when $\delta = 0.2$, since the criterion for equivalence is more lenient.  Conversely, with $\delta = 0.1$, the test is stricter, and p-values are generally larger. This is validated by comparing the results in Tables \ref{tab:synthetic1_d01_equiv} and \ref{tab:synthetic1_d02_equiv}.

\begin{table}[ht]
\centering
\caption{Synthetic setting 2, $\delta = 0.1$ (TOST p-values)}\label{tab:tost_syn2_d01}
\begin{tabular}{lcccc}
\toprule
\textbf{Model} & \textbf{Min} & \textbf{Median} & \textbf{Mean} & \textbf{Max}\\
\midrule
TARNet        & 1.00 & 1.00 & 1.00 & 1.00 \\
CausalForest  & $4.00\times10^{-6}$ & $1.60\times10^{-3}$ & $1.99\times10^{-2}$ & 0.336 \\
S-BART       & $1.00\times10^{-5}$ & $2.21\times10^{-3}$ & $3.08\times10^{-2}$ & 0.391 \\
T-BART       & $2.07\times10^{-4}$ & $4.54\times10^{-3}$ & $2.92\times10^{-2}$ & 0.282 \\
S-engression & $9.38\times10^{-4}$ & 0.108 & 0.191 & 0.745 \\
T-engression & $2.30\times10^{-2}$ & 0.109 & 0.162 & 0.381 \\
\bottomrule
\end{tabular}
\label{tab:synthetic2_d01_equiv}
\end{table}

\begin{table}[ht]
\centering
\caption{Synthetic setting 2, $\delta = 0.2$ (TOST p-values)}\label{tab:tost_syn2_d02}
\begin{tabular}{lcccc}
\toprule
\textbf{Model} & \textbf{Min} & \textbf{Median} & \textbf{Mean} & \textbf{Max}\\
\midrule
TARNet        & 1.00 & 1.00 & 1.00 & 1.00 \\
CausalForest  & $3.64\times10^{-12}$ & $2.64\times10^{-10}$ & $1.08\times10^{-7}$ & $2.00\times10^{-6}$ \\
S-BART       & $4.10\times10^{-6}$  & $8.02\times10^{-3}$  & $4.32\times10^{-2}$ & 0.275 \\
T-BART       & $1.02\times10^{-3}$  & $2.67\times10^{-2}$  & $5.43\times10^{-2}$ & 0.329 \\
S-engression & $3.03\times10^{-6}$  & $1.36\times10^{-3}$  & $2.26\times10^{-2}$ & 0.222 \\
T-engression & $1.85\times10^{-3}$  & $4.77\times10^{-2}$  & $6.37\times10^{-2}$ & 0.214 \\
\bottomrule
\end{tabular}
\label{tab:synthetic2_d02_equiv}
\end{table}

Setting 2 involves larger domain shifts than Setting 1, making model generalizability more challenging. As expected, results in Tables \ref{tab:synthetic2_d01_equiv} and \ref{tab:synthetic2_d02_equiv} show generally higher p-values under Setting 2, reflecting the difficulty in rejecting the null hypothesis of non-equivalence and non-generalizability.

We also ran equivalence testing under the same setting as in \Cref{fig:synthetic_mean_p_noshift}, where Linear Regression models are expected to exhibit perfect transportability. Accordingly, when setting $\delta = 0.1$, we obtain the results in \Cref{tab:linear_equiv}. These are exactly as expected---models should reject the null hypothesis in this setting, confirming their strong transportability under equivalence testing.

\begin{table}[ht]
\centering
\caption{Linear-regression transportability, synthetic ($\delta = 0.1$)}\label{tab:tost_linreg}
\begin{tabular}{lcccc}
\toprule
\textbf{Model} & \textbf{Min} & \textbf{Median} & \textbf{Mean} & \textbf{Max}\\
\midrule
S-Linear & $1.47\times10^{-13}$ & $1.47\times10^{-10}$ & $1.28\times10^{-8}$ & $2.17\times10^{-7}$ \\
T-Linear & $1.54\times10^{-11}$ & $1.10\times10^{-8}$  & $4.57\times10^{-7}$ & $9.10\times10^{-6}$ \\
\bottomrule
\end{tabular}
\label{tab:linear_equiv}
\end{table}


\section{INTERPRETING TESTING RESULTS}
\label{sec:read_p}
We further explain the motivation of our paper, as well as guidance of reading the testing results.

All p-values, including their distributions, are highly informative in evaluating generalizability. For example, consistently small p-values (as shown in \Cref{fig:ihdp_mean}), indicate a clear failure of model generalizability in that setting. Conversely, uniform distributions of p-values (e.g.~linear regression results in \Cref{fig:synthetic_mean_p_noshift}) demonstrate more trust in the model’s generalizability. Type-I error control serves a critical role in distinguishing between competing hypotheses with a minimal probability of error. In our framework, controlling Type-I error ensures that conclusions about non-generalization when a model fails the test are not driven by random noise. This rigour is crucial for causal inference, where decisions based on incorrect conclusions can have significant consequences. In contrast, predictive performance measures like MSE lack statistical safeguards, and interpretations of model performance under domain shifts would lack reliability and robustness.

We also provide explanations if all tests fail. As with any hypothesis test, failing to pass provides evidence against the tested hypothesis. In our framework, this means the algorithm lacks sufficient generalizability to infer the conditional treatment margin in new domains. If all algorithms fail, it signals none are suitable for reliable causal inference under the domain shift.

This highlights the need for alternative modelling approaches and underscores the value of our framework. Unlike MSE, which compares predictive performance, our method directly identifies failures in causal generalizability—an essential insight for researchers. We hope this clarifies how to interpret such results and guides researchers in determining next steps when all models fail.

\section{Comparison with scores}
\label{sec:compare_with_MSE}

Our testing framework is actionable in that it delivers a
principled, binary decision on whether a model is
\emph{generalizable} to a given domain.  This is essential for model
selection: rather than relying on metrics such as mean square error alone (which may
favour non-generalizable models), we first use our test to \emph{filter
out} models that fail to generalize.

Our method structures selection into two stages:
\begin{description}
  \item[\textbf{Stage~1:}] Apply the proposed testing procedure to
        identify models that generalize across domains.
  \item[\textbf{Stage~2:}] Among the models that pass the test,
        rank them with a predictive metric (such as MSE) and pick the
        best-performing one.
\end{description}

This two-stage approach ensures selection is both statistically sound
and practically robust: it prioritizes generalizability before
performance.  In this framework we choose models that are
``good and generalizable,'' not merely ``relatively good'' by the score 
alone.  MSE is actionable only in the sense that it lets one compare
already viable models and hyper-parameter settings.

We do not argue against continuous score; instead, we view them and our test as
complementary.  Our test provides statistical guarantees on
generalizability.  Once generalizable models are identified, MSE can rank
their relative predictive performance.  Relying on a continuous score alone is
insufficient---a model may achieve a comparatively low score in one domain, yet fail to
generalize elsewhere.

To highlight the discrepancy, we re-ran the experiments in the paper and
recorded mean square error across 50 trials.  Tables~\ref{tab:syn1},
\ref{tab:syn2}, and~\ref{tab:real} show the minimum and maximum MSE
values and the corresponding performance ranks (lower is better) for
each model.

\begin{table}[ht]
  \centering
  \caption{Synthetic Setting~1: MSE statistics over 50 trials}
  \label{tab:syn1}
  \begin{tabular}{lcccc}
    \toprule
    \textbf{Model} & \textbf{Min} & \textbf{Max} & \textbf{Min Rank} & \textbf{Max Rank}\\
    \midrule
    TARNet         & 2.40 & 2.68 & 6 & 6 \\
    CausalForest   & 0.001 & 0.041 & 1 & 4 \\
    S-BART          & 0.040 & 0.067 & 1 & 4 \\
    T-BART          & 0.004 & 0.130 & 1 & 5 \\
    S-engression   & 0.016 & 0.144 & 2 & 5 \\
    T-engression   & 0.006 & 0.080 & 1 & 5 \\
    \bottomrule
  \end{tabular}
\end{table}

\begin{table}[ht]
  \centering
  \caption{Synthetic Setting~2: MSE statistics over 50 trials}
  \label{tab:syn2}
  \begin{tabular}{lcccc}
    \toprule
    \textbf{Model} & \textbf{Min} & \textbf{Max} & \textbf{Min Rank} & \textbf{Max Rank}\\
    \midrule
    TARNet         & 2.09  & 2.92  & 5 & 6 \\
    CausalForest   & 0.012 & 0.230 & 1 & 3 \\
    S-BART          & 0.040 & 0.150 & 2 & 5 \\
    T-BART          & 0.030 & 0.200 & 1 & 3 \\
    S-engression   & 0.120 & 0.600 & 4 & 6 \\
    T-engression   & 0.020 & 0.180 & 1 & 4 \\
    \bottomrule
  \end{tabular}
\end{table}

\begin{table}[ht]
  \centering
  \caption{IHDP: MSE statistics over 50 trials}
  \label{tab:real}
  \begin{tabular}{lcccc}
    \toprule
    \textbf{Model} & \textbf{Min} & \textbf{Max} & \textbf{Min Rank} & \textbf{Max Rank}\\
    \midrule
    TARNet         & 10.2 & 86.0 & 6 & 6 \\
    CausalForest   & 0.03  & 6.19  & 1 & 5 \\
    S-BART          & 0.03  & 6.72  & 1 & 4 \\
    T-BART          & 0.02  & 6.25  & 1 & 3 \\
    S-engression   & 0.10  & 10.16 & 2 & 5 \\
    T-engression   & 0.05  & 6.45  & 1 & 5 \\
    \bottomrule
  \end{tabular}
\end{table}

MSE and rank summaries provide no statistical confidence that a
model generalizes.  For example, S-BART's minimum MSE in Synthetic setting 2
is $0.04$, far smaller than TARNet's, yet this does not prove S-BART
generalizes.  In contrast, \Cref{fig:synthetic_mean_p} of our paper shows small $p$-values for
S-BART, letting us reject the null hypothesis of generalizability at the 5\% level. Another limitation of MSE is that in heterogeneous or endogenous noise
settings, cross-domain MSEs may diverge even with a perfectly specified
CATE model.  Differing noise levels alone can create apparent
performance gaps.

\newpage

\end{document}